\documentclass{new_tlp}
\usepackage{amssymb}
\usepackage{amsmath}
\usepackage{url}
\usepackage{color}



\usepackage{IEEEtrantools}
\usepackage{enumitem}
\usepackage{krudces-main}
\usepackage{krudces-thm}
\usepackage{krudces-thme}

\def\u{\hbox{\bf u}}
\def\P{{\cal P}}
\def\C{{\cal C}}
\def\F{{\cal F}}

\def\cS{\mathcal{S}}
\def\cSB{\mathcal{SB}}
\def\h{h}
\def\t{t}
\def\SQHT{{\hbox{\rm SQHT}^=}}
\def\SQHTF{{\hbox{\rm SQHT}^=_\F}}
\def\SQHTS{{\hbox{\rm SQHT}^=_\cS}}
\def\QELF{{\hbox{\rm QEL}^=_\F}}

\def\qed{~\hfill$\Box$}

\def\signature{\Sigma = \tuple{\C,\F,\P}}
\def\Terms{\mathit{Terms}}
\def\TermsF{\mathit{Terms_\F}}
\def\HBF{\ensuremath{\mathcal{HB}_\F}}
\def\HB{\ensuremath{\mathcal{HB}}}
\def\Uni{\ensuremath{\mathcal{D}}}

\def\IAA{\ensuremath{\IA = \tuple{\sigma^h,\sigma^t,I^h,I^t}}}

\def\IAAa{\ensuremath{\IA_1 = \tuple{\sigma^h_1,\sigma^t_1,I^h_1,I^t_1}}}
\def\IAAb{\ensuremath{\IA_2 = \tuple{\sigma^h_2,\sigma^t_2,I^h_2,I^t_2}}}
\def\IAH{\ensuremath{\IA = \tuple{\sigma^h,\sigma^t,H,T}}}

\def\modelsF{\models_{\hspace{-1pt}\scriptscriptstyle\F}}
\def\modelsS{\models_{\hspace{-1pt}\scriptscriptstyle\cS}}

\def\Alog{{\cal A}\mathit{log}}

\newcommand\Coherent[1]{\mathit{Coh}(#1)}

\newcommand\GZatom{\mbox{GZ-atom}}
\newcommand\GZformula{\mbox{GZ-formula}}
\newcommand\GZtheory{\mbox{GZ-theory}}

\def\modelscl{\models_{\!\mathit{cl}}}
\def\fF{\varphi}
\def\fG{\psi}

\def\IAT{\ensuremath{\IA = \tuple{\sigma,T}}}

\def\grnd{\mathtt{Gr}}
\newcommand{\grndp}[2]{\mathtt{Gr}^+_{#1}(#2)}

\usepackage{krudces-notation}

\definecolor{darkred}{rgb}{0.8,0,0.2}

\def\rel{\unlhd}


\begin{document}
\bibliographystyle{acmtrans}

\submitted{21 June 2009}
\revised{}
\accepted{}
\title[Functional ASP with Intensional Sets]
    {Functional ASP with Intensional Sets:\\
     Application to Gelfond-Zhang Aggregates\thanks{Partially supported by MINECO, Spain, grant TIC2017-84453-P, Xunta de Galicia, Spain (GPC ED431B 2016/035 and 2016-2019 ED431G/01, CITIC). The second author is funded by the Centre International de Math\'{e}matiques et d'Informatique de Toulouse (CIMI) through contract ANR-11-LABEX-0040-CIMI within the program ANR-11-IDEX-0002-02. The fourth author is supported by UPM project RP151046021.}}

  \author[P. Cabalar, J. Fandinno, L. Fari{\~n}as and D. Pearce]
         {Pedro Cabalar, Jorge Fandinno, Luis Fari{\~n}as del Cerro and David Pearce\\
          Department of Computer Science,
          University of Corunna, Corunna, Spain.\\
          {\email{cabalar@udc.es}}\\\\
          IRIT, Universit{\'e} de Toulouse, CNRS, Toulouse, France\\
          {\email{\{jorge.fandinno, luis\}@irit.fr}}\\\\
          Universidad Polit\'ecnica de Madrid, Madrid, Spain\\
      {\email{david.pearce@upm.es}}}


\maketitle

\begin{abstract}
In this paper, we propose a variant of Answer Set Programming (ASP) with evaluable functions that extends their application to sets of objects, something that allows a fully logical treatment of aggregates.
Formally, we start from the syntax of First Order Logic with equality and the semantics of Quantified Equilibrium Logic with evaluable functions ($\QELF$).
Then, we proceed to incorporate a new kind of logical term, \emph{intensional set} (a construct commonly used to denote the set of objects characterised by a given formula), and to extend $\QELF$ semantics for this new type of expression.
In our extended approach, intensional sets can be arbitrarily used as predicate or function arguments or even nested inside other intensional sets, just as regular first-order logical terms.
As a result, aggregates can be naturally formed by the application of some evaluable function ({\tt count}, {\tt sum}, {\tt maximum}, etc) to a set of objects expressed as an intensional set.
This approach has several advantages.
First, while other semantics for aggregates depend on some syntactic transformation (either via a reduct or a formula translation), the $\QELF$ interpretation treats them as regular evaluable functions, providing a compositional semantics and avoiding any kind of syntactic restriction.
Second, aggregates can be explicitly defined now within the logical language by the simple addition of formulas that fix their meaning in terms of multiple applications of some (commutative and associative) binary operation.
For instance, we can use recursive rules to define {\tt sum} in terms of integer addition.
Last, but not least, we prove that the semantics we obtain for aggregates coincides with the one defined by Gelfond and Zhang for the $\Alog$ language, when we restrict to that syntactic fragment.\\
(\emph{Under consideration for acceptance in TPLP})
\end{abstract}
\begin{keywords}
Answer Set Programming, Equilibrium Logic, Partial Functions, Aggregates.
\end{keywords}

\section{Introduction}

\label{sec:intro}

Due to its extensive use for practical Knowledge Representation and Reasoning (KRR), the paradigm of \emph{Answer Set Programming} (ASP;\ \citeNP{baral2003knowledge}) has been continuously subject to multiple extensions of its input language and, frequently, its formal semantics.
One of those possible extensions is the addition of \emph{evaluable functions} (see~\citeNP{Cab13} for a survey).
This extension allows us, for instance, to replace the conjunction $mother(cain,X)\wedge mother(abel,X)$ by the equality $mother(cain)$ = $mother(abel)$ so that (i) $mother$ can be better captured as a function (a person has a unique mother) and (ii) it is not treated as a Herbrand function, since syntactically different terms may refer to the same object.
Although several prototypes for functional ASP have been developed~\cite{LW08,Cabalar11,Balduccini13,BL14}, the use of evaluable functions has not been commonly adopted in the mainstream ASP solvers yet.
Still, their logical definition can also be useful for other common ASP extensions, as happened with their application to constraint ASP~\cite{CKOS16}.
Another ASP extension that can be examined under the functional viewpoint is the use of aggregates.
An \emph{aggregate} is the result of an operation on a set of values, such as their cardinality, their sum, their maximum/minimum value, etc.
ASP introduces this feature via so-called \emph{aggregate atoms}, that allow comparing the result of an aggregate with some fixed value.
To put an example, suppose $p(X)$ means that Agatha Christie wrote book $X$.
Then, adding the aggregate atom $\mathtt{count}\set{X:p(X) } \geq n$ in a rule body checks that she wrote at least $n$ books.
Defining the semantics for these atoms may become tricky, since it is easy to build self-referential rules like:
\begin{eqnarray}
p(a) \leftarrow \mathtt{count}\set{X:p(X) }\geq n.
\label{f:count1}
\end{eqnarray}
to express that Mrs. Christie also writes an autobiography $a$ if she writes at least $n$ books.
Different alternative semantics have been proposed for ASP aggregates~\cite{siniso02a,pedebr07a,sopo07a,ferraris11a,fapfle11a,gezh14a} but all of them have treated each aggregate atom as a whole, without providing a semantics for its individual components.
A different, and perhaps more natural possibility, is to consider inequality as a standard predicate and interpret $\mathtt{count}(S)$ as an \emph{evaluable function}, whose argument $S$ happens to be a set.

In this paper, we propose an extension of ASP with evaluable functions that allows their application to sets of objects and the treatment of aggregates as functions.
To this aim, we start from the first-order logic characterisation of ASP, \emph{Quantified Equilibrium Logic} (QEL;\ \citeNP{PV04}) plus its extension to evaluable functions ($\QELF$; \citeNP{Cabalar11}).
Then, we proceed to include a new type of logical term, \emph{intensional set}, with the form $\set{ \vec{\tau}(\vec{x}) : \varphi(\vec{x}) }$ and the expected meaning, that is, the set of tuples $\vec{\tau}(\vec{c})$ for which the formula $\varphi(\vec{c})$ holds  
(having $\vec{c}$ and $\vec{x}$ the same arity).
Intensional sets constitute a quite common mathematical notation and, in fact, have been already studied in the context of Prolog~\cite{DovierOPR91} and Constraint Logic Programming~\cite{DovierPR03}.
In our case, we will treat them as regular, first-order logical terms, without syntactic restrictions, so they can be arbitrarily nested in other expressions. 
One interesting feature inherited from $\QELF$ is that functions can be partial, so we can use them to represent that, say, $mother(adam)$, $mother(eve)$ or $division(3,0)$ are undefined, but also that the maximum value of an empty set $\mathtt{max}(\emptyset)$ is undefined too.
The new extension allows now to define new aggregates within the logical language.
It suffices 
to add
formulas to fix their meaning in terms of multiple applications of some (commutative and associative) binary operation.
For instance, we show how to define the {\tt sum} aggregate using recursive rules in terms of 
integer addition.
Finally, we are also able to prove that, when restricted to the the $\Alog$ language~\cite{gezh14a}, there is a semantic, one-to-one correspondence.

The rest of the paper is organised as follows.
In Section~\ref{sec:eq} we recall the basic definitions of Functional ASP under the $\QELF$ interpretation.
Section~\ref{sec:isets} introduces intensional sets while Section~\ref{sec:agg} studies their use for aggregates.
Section~\ref{sec:gz} focuses on the correspondence to \cite{gezh14a} aggregates.
Finally, Section~\ref{sec:conc} concludes the paper.

\section{Background: Quantified Equilibrium Logic with Evaluable Functions}
\label{sec:eq}

The definition of propositional Equilibrium Logic~\cite{Pea96} relied on a selection criterion on models of the intermediate logic of \emph{Here-and-There}~(HT;~\citeNP{Hey30}). The first order case~\mbox{\cite{PV04}} followed similar steps, introducing a quantified version of HT, called $\SQHT$ that stands for \emph{Quantified HT with static domains\footnote{The term \emph{static domain} means that the universe is shared among all worlds in the Kripke frame.} and equality}.  In this section we describe the syntax and semantics of a variant of the latter, called $\SQHTF$~\cite{Cabalar11}, for dealing with evaluable functions. 

We begin by defining a first-order language by its \emph{signature}, a tuple $\signature$ of disjoint sets where $\C$ and $\F$ are sets of \emph{function names} and $\P$ a set of \emph{predicate names}. We assume that each function (resp. predicate) name has the form $f/n$ where $f$ is the function (resp. predicate) symbol, and $n\geq 0$ is an integer denoting the number of arguments (or \emph{arity}). Elements in $\C$ will be called \emph{Herbrand functions} (or \emph{constructors}), whereas elements in $\F$ will receive the name of \emph{evaluable functions} (or \emph{operations}). The sets $\C_0$ (Herbrand constants) and $\F_0$ (evaluable constants) respectively represent the elements of $\C$ and $\F$ with arity $0$. We assume $\C_0$ contains at least one element.

First-order formulas follow the syntax of classical predicate calculus with equality~``$=$''.
We assume that default negation~\mbox{$\neg \varphi$} is defined as \mbox{$\varphi \rightarrow \bot$}.
We use letters $x, y, z$ and their capital versions to denote variables, $\tau$ to denote terms, and letters $c, d, e$ to denote ground terms.
Tuples of variables, terms and ground terms are respectively represented by $\vec{x}, \vec{\tau}, \vec{c}$.
By abuse of notation, when a tuple contains a single element, we write just $\tau$ instead of $\tuple{\tau}$.
When writing formulas, we assume that all free variables are implicitly universally quantified. 
An atom like \mbox{$\tau=\tau'$} is called an \emph{equality atom}, whereas an atom like \mbox{$p(\tau_1,\dots,\tau_n)$} for any predicate $p/n$ different from equality receives the name of \emph{predicate atom}.
Given any set of functions $S$ we write $\TermsF(S)$ to stand for the set of ground terms built from functions (and constants) in~$S$. In particular, the set of all possible ground terms for the signature $\signature$ would be $\TermsF(\C \cup \F)$ whereas the subset $\TermsF(\C)$ will be called the \emph{Herbrand Universe} of the language ${\cal L}_{\F}(\Sigma)$. The \emph{Herbrand Base}~$\HBF(\C,\P)$ is the set containing all atoms that can be formed with predicates in $\P$ and terms in the Herbrand Universe, $\TermsF(\C)$.


\begin{definition}[$\SQHTF$-assignment]
An $\SQHTF$-\emph{assignment} $\sigma$ for a signature \mbox{$\tuple{\C,\F,\P}$}
is a function
\mbox{$\sigma : \TermsF(\C \cup \F) \rightarrow \TermsF(\C) \cup \{ \u \}$}
that maps any ground term in the language to some ground term in the Herbrand Universe or the special value $\u \not\in \TermsF(\C \cup \F)$ (standing for \emph{undefined}). The function $\sigma$ must satisfy:
\begin{enumerate}[ label=(\roman*), leftmargin=20pt]
\item $\sigma(c) \eqdef c$ for all $c \in \TermsF(\C)$.
\item 
$\sigma(f(\tau_1,\dots,\tau_n)) \eqdef \left\{
\begin{array}{ll}
\u & \hbox{if } \sigma(\tau_i)=\u \ \hbox{for some } i=1\dots n \\
\sigma(f(\sigma(\tau_1),\dots,\sigma(\tau_n))) & \hbox{otherwise} 
\end{array}
\right.$
\end{enumerate}
\vspace{-15pt}
\qed
\end{definition}

As we can see, the value of any functional term is an element from the Herbrand Universe $\TermsF(\C)$, excluding the cases in which operations are left undefined (i.e., they are \emph{partial} functions) -- if so, they are assigned the special element $\u$ (outside the universe) instead.
Condition (i) asserts, as expected, that any term $c$ from the Herbrand Universe has the fixed valuation $\sigma(c)=c$. Condition (ii) asserts that a functional term with an undefined argument becomes undefined in its turn (functions like these are called \emph{strict}). Otherwise, if all arguments are defined, then functions preserve their interpretation through subterms -- for instance, if we have $\sigma(f(a))=c$ we expect that $\sigma(g(f(a)))$ and $\sigma(g(c))$ coincide. It is easy to see that (ii) implies that $\sigma$ is completely determined by the mappings $f(\vec{c})=d$ where $f$ is any operation, $\vec{c}$ a tuple of elements from $\TermsF(\C)$, and $d$ an element in the latter. We call these expressions \emph{ground functional facts}.

\begin{definition}[Ordering $\preceq$ among assignments]
\label{def:sqht.order}
Given two assignments $\sigma, \sigma'$ we define $\sigma \preceq \sigma'$ as the condition: 
\mbox{$\sigma(\tau)=\sigma'(\tau)$} or \mbox{$\sigma(\tau)=\u$}, for all terms $\tau \in \TermsF(\C \cup \F)$.
\qed
\end{definition}
As usual, we write \mbox{$\sigma \prec \sigma'$} when \mbox{$\sigma \preceq \sigma'$} and \mbox{$\sigma \neq \sigma'$}.
The intuitive meaning of
\mbox{$\sigma \preceq \sigma'$}
is that both contain \emph{compatible information}, but the former contains \emph{less information} than the latter: any defined function in
$\sigma$ must preserve the same value in $\sigma'$.

\begin{definition}[$\SQHTF$-interpretation]
\label{def:sqht-interp}
An {\normalfont$\SQHTF$}-\emph{interpretation} $\IA$ for a signature \mbox{$\signature$} is a
quadruple $\tuple{\sigma^h\!,\sigma^t\!,I^h\!,I^t}$
where
\mbox{$I^h \subseteq I^t \subseteq \HBF$}
are two sets of ground atoms
and $\sigma^h$ and  $\sigma^t$
are two assigments
satisfying
\mbox{$\sigma^h \preceq \sigma^t$}.
\qed
\end{definition}

The superindices $\h,\t$ represent two intuitionistic Kripke worlds (respectively standing for \emph{here} and \emph{there}) with a reflexive ordering relation satisfying $\h \leq \t$. 
Accordingly, world $h$ contains less information than $t$, as we can see in the conditions $I^h \subseteq I^t$ and $\sigma^h \preceq \sigma^t$.
We say that the interpretation $\IA$ is \emph{total}\footnote{Note that by \emph{total} we do not mean that functions cannot be left undefined. We may still have some term $d$ for which $\sigma^\h(d)=\sigma^\t(d)=\u$.} when both worlds contain the same information, that is, $I^h = I^t$ and $\sigma^h = \sigma^t$, and we abbreviate it as the pair $\tuple{\sigma^t,I^t}$. 
Moreover, given $\IAA$ we define its corresponding total interpretation $\IA^t$ as $\tuple{\sigma^t,I^t}$ that is, the one in which all the uncertainty in world $h$ is ``filled'' with the information in $t$. 

An interpretation $\IAA$ \emph{satisfies} a formula $\varphi$ at some world $w\in \{h,t\}$, written $\IA,w \modelsF \varphi$, when any of the following conditions are satisfied:

\begin{enumerate}[ label=\roman*)]
\item $\IA,w \modelsF p(\tau_1,\dots,\tau_n)$ if $p( \sigma^w(\tau_1), \dots, \sigma^w(\tau_n) ) \in I^w$ for any predicate $p/n \in {\cal P}$;
\item $\IA,w \modelsF \tau_1=\tau_2$ if $\sigma^w(\tau_1)=\sigma^w(\tau_2) \neq \u$;
\item $\IA,w \not\modelsF \bot$; $\IA,w \modelsF \top$;
\item $\IA,w \modelsF \alpha \wedge \beta$ if $\IA,w \modelsF \alpha$ and $\IA,w \modelsF \beta$;
\item $\IA,w \modelsF \alpha \vee \beta$ if $\IA,w \modelsF \alpha$ or $\IA,w \modelsF \beta$;
\item $\IA,w \modelsF \alpha \rightarrow \beta$ if for all $w'\geq w$: $\IA,w' \not\modelsF \alpha$ or $\IA,w' \modelsF \beta$;
\item $\IA,w \modelsF \forall x \ \alpha(x)$ if for each
$c \in \TermsF(\C)$:
$\IA,w \modelsF \alpha(c)$;
\item $\IA,w \modelsF \exists x \ \alpha(x)$ if for some $c \in \TermsF(\C)$: $\IA,w \modelsF \alpha(c)$.
\end{enumerate}
The first condition above implies that an atom with an undefined argument will always be evaluated as false since, by definition, $\u$ never occurs in ground atoms of $I^\h$ or $I^\t$. Something similar happens with equality:
\mbox{$\tau_1 = \tau_2$}
will be false if any of the two operands, or even both, are undefined. As usual, $I$ is called a \emph{model} of a theory $\Gamma$, written $I \modelsF \Gamma$, when $I,h \modelsF \varphi$ for all $\varphi \in \Gamma$.

\begin{proposition}[From~\protect\citeNP{Cabalar11}]\label{prop:SQHTF.neg}
$\IA,h \modelsF \neg \varphi$ \ $\Leftrightarrow$\ \ $\IA,t \modelsF \neg \varphi$\ $\Leftrightarrow$\ \ $\IA,t \not\modelsF \varphi$.\qed
\end{proposition}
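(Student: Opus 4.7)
My plan is to unfold $\neg\varphi$ as $\varphi \rightarrow \bot$ and apply clause (vi) of the satisfaction definition, reducing everything to a \emph{persistence} property which then has to be established by structural induction on $\varphi$.

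First I would handle the bookkeeping. By (vi), $\IA,w \modelsF \varphi \rightarrow \bot$ holds iff for every $w' \geq w$ we have $\IA,w' \not\modelsF \varphi$ (since $\bot$ is never satisfied by clause (iii)). Taking $w = t$, the only world $w' \geq t$ is $t$ itself, so the right-hand equivalence $\IA,t \modelsF \neg\varphi \Leftrightarrow \IA,t \not\modelsF \varphi$ is immediate. Taking $w = h$, the worlds above $h$ are both $h$ and $t$, so $\IA,h \modelsF \neg\varphi$ iff $\IA,h \not\modelsF \varphi$ \emph{and} $\IA,t \not\modelsF \varphi$. Consequently, the remaining equivalence $\IA,h \modelsF \neg\varphi \Leftrightarrow \IA,t \modelsF \neg\varphi$ reduces, via contraposition, to showing that $\IA,h \modelsF \varphi$ implies $\IA,t \modelsF \varphi$ for every formula $\varphi$.

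The central step is thus the persistence lemma, proved by induction on the structure of $\varphi$. The critical base case is that of a predicate atom $p(\tau_1,\dots,\tau_n)$: assuming $\IA,h \modelsF p(\vec{\tau})$, we have $p(\sigma^h(\tau_1),\dots,\sigma^h(\tau_n)) \in I^h$, and in particular $\sigma^h(\tau_i) \neq \u$ for every $i$ (since $\u$ never occurs inside atoms of $I^h$). By Definition~\ref{def:sqht.order}, $\sigma^h \preceq \sigma^t$ then forces $\sigma^h(\tau_i) = \sigma^t(\tau_i)$ for each $i$, and combined with $I^h \subseteq I^t$ we conclude $p(\sigma^t(\vec{\tau})) \in I^t$, i.e.\ $\IA,t \modelsF p(\vec{\tau})$. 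The equality-atom case is essentially identical: if $\sigma^h(\tau_1) = \sigma^h(\tau_2) \neq \u$, then both sides are preserved to $\sigma^t$. The $\top$ and $\bot$ clauses are trivial.

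The inductive cases for $\wedge,\vee,\forall,\exists$ follow directly from the induction hypothesis applied componentwise (and to ground instances for the quantifiers, since the domain $\TermsF(\C)$ does not depend on the world). For implication, note that clause (vi) is already ``persistence-shaped'': if the condition ``for all $w' \geq h$'' holds, then a fortiori ``for all $w' \geq t$'' holds, because $\{w' : w' \geq t\} \subseteq \{w' : w' \geq h\}$. The main obstacle I anticipate is the atomic case, where one must be careful that the ordering $\sigma^h \preceq \sigma^t$ is used correctly: it allows $\sigma^h$ to be \emph{undefined} where $\sigma^t$ is defined, but \emph{not} vice versa, and this asymmetry is exactly what makes the direction from $h$ to $t$ work while blocking the converse. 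Once persistence is in hand, the two stated equivalences are an immediate rearrangement of the observations in the first paragraph.
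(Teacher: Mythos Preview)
The paper does not actually prove Proposition~\ref{prop:SQHTF.neg}; it is quoted as a known result from \cite{Cabalar11} and used as a black box (notably in the proof of Proposition~\ref{prop:neg}). Your argument is correct and is exactly the standard one for Here-and-There--style logics: unfold $\neg\varphi$ as $\varphi\to\bot$, reduce the nontrivial equivalence to the persistence property $\IA,h\modelsF\varphi \Rightarrow \IA,t\modelsF\varphi$, and establish persistence by structural induction, with the atomic cases carried by $\sigma^h\preceq\sigma^t$ together with $I^h\subseteq I^t$. For what it is worth, the paper's proof of the analogous Proposition~\ref{prop:neg} for $\SQHTS$ sketches precisely this persistence induction for its part~(i), and then for part~(ii) appeals back to Proposition~\ref{prop:SQHTF.neg} via the correspondence of Proposition~\ref{prop:SQHTF.correspondence}; so your self-contained derivation is in fact more informative than what the paper provides for this particular statement.
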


We define next a particular ordering among $\SQHTF$-interpretations.
We say that $\IAAa$ is smaller than $\IAAb$, also written $\IA_1 \preceq \IA_2$ by abuse of notation, when $I^t_1=I^t_2$, $\sigma^t_1=\sigma^t_2$, $I^h_1 \subseteq I^h_2$ and $\sigma^h_1 \preceq \sigma^h_2$.
That is, they have the same information at world $t$, but $\IA_1$ can have less information
than $\IA_2$ at world $h$.
Again, we write $\IA_1 \prec \IA_2$ when $\IA_1 \preceq \IA_2$ and $\IA_1 \neq \IA_2$.
Nonmonotonicity is obtained by the next definition, which introduces the idea of equilibrium models for $\SQHTF$.

\begin{definition}[Equilibrium model]\label{def:SQHTF.equilibrium}
A total model $\IA=\tuple{\sigma,I}$ of a theory $\Gamma$ is an \emph{equilibrium model} if there is no strictly smaller interpretation $\IA' \prec \IA$ that is also a model of $\Gamma$.
A set of atoms $I$ is a \emph{stable model}\footnote{Apart from atoms, we could additionally include ground functional facts $f(\vec{c})=d$. However, we only consider atoms here, for better comparison to other (non-functional) semantics of aggregates.} of $\Gamma$ iff $\tuple{\sigma,I}$ is an equilibrium model for some $\sigma$.\qed
\end{definition}

\section{QEL with Evaluable Functions and Intensional Sets}
\label{sec:isets}

In this section, we define $\SQHTS$: a logic that extends $\SQHTF$ with \emph{intensional sets}.
Intensional sets are terms of the form 
$\set{ \vec{x} : \vec{\tau}(\vec{x}) : \varphi(\vec{x}) }$
where 
$\vec{x}$ is a tuple of variables and
$\vec{\tau}(\vec{x})$ and $\varphi(\vec{x})$ are respectively a tuple of terms and a formula with free variables~$\vec{x}$.

Note that, as opposed to terms in $\SQHTF$ (and also in first order logic), intensional sets are terms whose structure not only depends on other terms, but also on formulas.
Hence, we define terms and formulas recursively such that
the definition of $i$-terms will depend on the definition of $(i-1)$-formulas
while the definition of $i$-formulas will depend on the definition of $i$-terms.
We depart from a similar first-order signature~$\signature$ as in $\SQHTF$, but we build the terms as follows.
For any $i\geq 0$, we define an \emph{$i$-term} as any of the following cases:
\begin{enumerate}[ topsep=2pt, itemsep=0pt, label=\roman*), start=1 ]

\item every constant $c \in \C_0 \cup \F_0$.

\item every variable $x$.

\item $f(\tau_1,\dots,\tau_n)$, where \mbox{$f/n \in (\C \cup \F)$} 
and $\tau_1,\dots,\tau_n$ are $i$-terms, in their turn.

\item the construct $\set{\vec{\tau}_1, \dotsc, \vec{\tau}_{m}}$ (called \emph{extensional set}), where $m\geq 0$ and $\vec{\tau}_1, \dotsc, \vec{\tau}_m$ are $n$-tuples (of the same arity $n\geq 1$) of $i$-terms .
If $m=0$ we write $\emptyset$ instead of $\{\}$.

\item the construct $\set{\vec{\tau} \!:\! \varphi }$ (called \emph{intensional set}) if $i>0$, $\varphi$ is an $(i-1)$-formula and $\vec\tau$ is a tuple of $i$-terms.
\label{item:set.definition.term}
\end{enumerate}
Now, for any $i\geq0$, $i$-\emph{atoms} and $i$-\emph{formulas} are defined over $i$-terms as follows:
\begin{enumerate}[ topsep=2pt, itemsep=0pt, label=\roman*), start=6 ]

\item 
if $\tau_1,\dotsc, \tau_n$ are $i$-terms\ and $p/n \in {\cal P}$, then $p(\tau_1,\dotsc, \tau_n)$ is an $i$-atom,

\item if $\tau_1$ and $\tau_2$ are $i$-terms,
then \mbox{$\tau_1 = \tau_2$} is an $i$-atom\

\item every $i$-atom\ is an $i$-formula,

\item $\bot$ and $\top$ are $0$-formulas,

\item if $\varphi_1$ and $\varphi_2$ are $i$-formulas, then $\varphi_1 \otimes \varphi_2$ is an $i$-formula\ with $\otimes\in \set{\wedge,\vee,\to}$.

\item if $\varphi$ is an $i$-formula\ and $x$ is a variable, then $\forall x \, \varphi$ and $\exists x \, \varphi$ are $i$-formulas.
\end{enumerate}
A \emph{formula} (resp. \emph{term}) is any $i$-formula (resp. $i$-term) for any $i\geq 0$.
Note that \ref{item:set.definition.term} is the only case in the term definition that refers to a formula, but this formula has less rank than the term, so the definition is well-founded.
$\Terms^i(\C \cup \F)$ denotes all the ground $i$-terms
while 
$\Terms^i(\C)$ denotes all ground $i$-terms without evaluable functions.
$\Terms(\C \cup \F) = \bigcup_{0 \leq i}\Terms^i(\C \cup \F)$
and
$\Terms(\C) = \bigcup_{0 \leq i}\Terms^i(\C)$
denote the set of all ground terms and ground terms without evaluable functions.
In particular,
$\Terms^0(\C)$ corresponds to the \emph{Herbrand Universe} that includes not only $\TermsF(\C)$ we had before, but also all possible formations of extensional sets, that act as a new constructor.
For instance, if we have the singleton $\C=\{c\}$, then $\TermsF(\C)=\{c\}$ is finite but $\Uni$ additionally contains an infinite number of (finite) extensional sets including, among others, the sets of tuples $\emptyset, \{c\}$\footnote{Recall that, here, $c$ stands for the unary tuple $\tuple{c}$.}, $\{\tuple{c,c}\}$, $\{\tuple{c,c,c}\}$, \dots, or combinations of nested sets such as $\{\{c\}\}$ or $\{\{c\}, \{\tuple{c,c}\}\}$, etc.
We also define
$\cSB \eqdef \Terms(\C) \setminus \TermsF(\C)$,
that is, the subset of the Herbrand universe consisting of extensional sets.
In the previous example $\cSB = \Terms(\C) \setminus \{c\}$.
By $\HB$ we denote the \emph{Herbrand Base}, that is, the set of all ground atoms
of the form $p(c_1,\dotsc,c_n)$ with $p/n \in {\cal P}$
and $\set{c_1,\dotsc,c_n} \subseteq \Terms(\C)$.

If we consider terms also formed with evaluable functions, we have that$\TermsF(\C \cup \F) \subseteq \Terms^0(\C \cup \F)$ again,
and so, every $\SQHTF$ formula is also a $\SQHTS$ formula --
obviously the converse does not hold, as the latter may contain set constructors
such as $p(\{c\})$.
Still, we could take each possible extensional set in $\cSB$ as a kind of Herbrand constant like those in $\C$.
Doing so, \mbox{$\TermsF(\C \cup \cSB \cup \F) = \Uni$} and, thus, every \mbox{$0$-formula} over a signature~\mbox{$\signature$} 
is also a $\SQHTF$ formula over the signature
\mbox{$\Sigma' = \tuple{\C \cup \cSB,\F,\P}$}.
Note, however, that intensional sets are syntactically different from any $\SQHTF$ term, so there are $\SQHTS$ terms (resp. formulas) that are not $\SQHTF$ terms (resp. formulas) over any signature.

For any expression $\alpha$ (term or formula), we define next when an occurrence of a variable is either \emph{free} or \emph{bound} to some quantifier/intensional set:
\begin{enumerate}[ topsep=2pt, itemsep=0pt, label=\roman*) ]
\item all free occurrences of $x$ in $\psi$ are bound in $\exists x \, \psi$ to its prefix quantifier $\exists x$
\item all free occurrences of $x$ in $\psi$ are bound in $\forall x \, \psi$ to its prefix quantifier $\forall x$
\item if $x$ occurs in $\vec{x}$, then all free occurrences of $x$ in $\psi$ and $\vec\tau$ are bound in $\set{\vec{x} \!:\! \vec{\tau}\!:\!\psi}$ to the outermost intensional set.

\item In the remaining cases, an occurrence of $x$ is bound (to some connective) in a formula iff it is so in some subformula; otherwise, it is free.
\end{enumerate}

As in the case of $\SQHTF$, when we write standalone formulas, we assume that all free variables are implicitly universally quantified.
Similarly,
we write
$\set{\vec{\tau}\!:\!\psi}$
instead of 
\mbox{$\set{\vec{x} \!:\! \vec{\tau}\!:\!\psi}$}
when $\vec{x}$ contains exactly all free variables in
$\vec{\tau}$.
Note that intensional sets play a role similar to quantifiers.
As an example, suppose we want to obtain the maximum number of times that the character $Poirot$ is mentioned in an Agatha Christie book $b$, and assume that predicate $\mathit{word}(b,i,w)$ tells us that the $i$-th word of book $b$ is $w$.
We assume by now that we have functions $\mathtt{count}$ and $\mathtt{max}$ on sets: their meaning will be fixed later on.
The set $\{i: word(b,i,Poirot)\}$ collects all occurrences of word $Poirot$ in book $b$.
Since $i$ is the only free variable to the left of `$:$', the intensional set is an abbreviation of $\{i: i: word(b,i,Poirot)\}$ revealing that $i$ is being varied. 
On the contrary, variable $b$ is left free.
Now, take the expression 
\begin{eqnarray}
\mathtt{max}\{\mathtt{count}\{i: word(b,i,Poirot)\}: author(Agatha,b)\} \label{f:poirot} 
\end{eqnarray}
The left term $\mathtt{count}\{i: word(b,i,Poirot)\}$ contains a free occurrence of $b$ while $i$ is bound to the inner intensional set.
Therefore, \eqref{f:poirot} actually stands for:
$$\mathtt{max}\{b: \mathtt{count}\{i: i: word(b,i,Poirot)\}: author(Agatha,b)\}$$
that is obviously less readable than \eqref{f:poirot}.
However, in the general case, we may need to make use of the explicit list of quantified variables.
For instance, if we want to parameterise the expression above for some author $x$ and character $y$ whose values are determined outside the term (as part of a formula), then we would necessarily write:
\begin{eqnarray}
\mathtt{max}\{b: \mathtt{count}\{i: word(b,i,y)\}: author(x,b)\} \label{f:poirot2} 
\end{eqnarray}
because the free occurrence of $y$ in $\mathtt{count}\{i: word(b,i,y)\}$ could make us incorrectly assume that it is being varied in the set, as happened with $b$.



\subsection{Semantics}

First, we need to define the domain in which terms are going to be interpreted.
Given a set $S$, let us define the set of all possible $n$-tuples of elements from $S$, for any $n \geq 1$, as 
$$\mathit{Tup}(S) \ \eqdef \ \bigcup_{n\geq 1} \setm{ \vec{e} }{ \vec{e} \in S^n }$$
Our domain $\Uni$ is inductively constructed as follows:
$$
\Uni^0 \ \eqdef \ \TermsF(\C) \hspace{50pt}
\Uni^{i+1} \ \eqdef \ \Uni^i \cup 2^{\mathit{Tup}(\Uni^i)}
$$
so that $\Uni \eqdef \bigcup_{0 \leq i} \Uni^i$.
We also define the subset of $\Uni$ consisting of sets as $\cS \eqdef \Uni \backslash \Uni^0$.

Definitions of assignments and interpretations are then straightforward:
we just extend the domain of $\sigma$ from $\TermsF(\C \cup \F)$ to $\Terms(\C \cup \F)$ and replace the Herbrand Universe $\TermsF(\C)$ by its corresponding $\Uni$. 
\begin{definition}[Assignment]\label{def:assign}
An \emph{assignment} $\sigma$ for a signature \mbox{$\signature$}
is a function
\mbox{$\sigma : \Terms(\C \cup \F) \rightarrow \Uni \cup \{ \u \}$}
that maps some ground term in the Herbrand Universe or the special value
\mbox{$\u \not\in Terms(\C \cup \F)$} (standing for \emph{undefined}) to any ground term in the language. 
Function $\sigma$ must satisfy:
\begin{enumerate}[ label=(\roman*), leftmargin=20pt]
\item $\sigma(c) \eqdef c$ for all $c \in \Uni$.
\item 
$\sigma(f(\tau_1,\dots,\tau_n)) \eqdef \left\{
\begin{array}{ll}
\u & \hbox{if } \sigma(\tau_i)=\u \ \hbox{for some } i=1\dots n \\
\sigma(f(\sigma(\tau_1),\dots,\sigma(\tau_n))) & \hbox{otherwise} 
\end{array}
\right.$
\item \label{def:assign3}
$\sigma(\{\vec\tau_1,\dots,\vec\tau_n\}) \eqdef \left\{
\begin{array}{ll}
\u & \hbox{if } \sigma(\vec\tau_i)=\u \ \hbox{for some } i=1\dots n\\
\{\, \sigma(\vec\tau_1),\dots,\sigma(\vec\tau_n)\, \} & \hbox{otherwise} 
\end{array}
\right.$\\
where\\
$\sigma(\tuple{\tau_1,\dots,\tau_m}) \eqdef \left\{
\begin{array}{ll}
\u & \hbox{if } \sigma(\tau_j)=\u \ \hbox{for some } j=1\dots m\\
\tuple{\, \sigma(\tau_1),\dots,\sigma(\tau_m)\, } & \hbox{otherwise} 
\end{array}
\right.$
\end{enumerate}
\vspace{-15pt}
\qed
\end{definition}
\noindent
Note that we added a third case (iii) for extensional sets, but there is no restriction on the values of intensional sets: their meaning will be fixed later, once we describe the satisfaction of formulas.

\begin{definition}[Ordering $\preceq$ among assignments]
\label{def:order}
Given two assignments $\sigma, \sigma'$ we define $\sigma \preceq \sigma'$, as the condition:
\mbox{$\sigma(\tau)=\sigma'(\tau)$} or \mbox{$\sigma(\tau)=\u$} for all terms $\tau \in \Terms(\C \cup \F)$.
\qed
\end{definition}

Interpretations $\IAA$ have the same form as before, with \mbox{$I^h \subseteq I^t \subseteq \HB$} and $\sigma^h \preceq \sigma^t$, but under the extended definition of assignment and Herbrand Base.

\begin{definition}[$\cS$-satisfaction]\label{def:satisfy.s}
Given an interpretation
$\IAA$, we define when $\IA$ \ \emph{$\cS$-satisfies} a formula~$\varphi$ at some world $w\in \{h,t\}$, written $\IA,w \modelsS \varphi$ as follows:
\begin{enumerate}[ label=\roman*), leftmargin=20pt]
\item $\IA,w \modelsS p(\tau_1,\dots,\tau_n)$ if $p( \sigma^w(\tau_1), \dots, \sigma^w(\tau_n) ) \in I^w$ for any predicate $p/n \in {\cal P}$;
\label{item:p:modelS}
\item $\IA,w \modelsS \tau_1=\tau_2$ if $\sigma^w(\tau_1)=\sigma^w(\tau_2) \neq \u$;
\item $\IA,w \not\modelsS \bot$; $\IA,w \modelsS \top$;
\item $\IA,w \modelsS \alpha \wedge \beta$ if $\IA,w \modelsS \alpha$ and $\IA,w \modelsS \beta$;
\item $\IA,w \modelsS \alpha \vee \beta$ if $\IA,w \modelsS \alpha$ or $\IA,w \modelsS \beta$;
\item $\IA,w \modelsS \alpha \rightarrow \beta$ if for all $w'\geq w$: $\IA,w' \not\modelsS \alpha$ or $\IA,w' \modelsS \beta$;
\label{item:impl:modelS}


\item $\IA,w \modelsS \forall x \ \alpha(x)$ if for each $c \in \Uni$: $\IA,w \modelsS \alpha(c)$;
\label{item:forall:modelS}

\item $\IA,w \modelsS \exists x \ \alpha(x)$ if for some $c \in \Uni$: $I,w \modelsS \alpha(c)$.
\label{item:exists:modelS}

\end{enumerate}
As usual, we write $I \modelsS \varphi$, when $I,h \modelsS \varphi$.\qed
\end{definition}

It is easy to see that rules~\ref{item:p:modelS}-\ref{item:impl:modelS} for $\modelsS$ are the exactly the same as for $\modelsF$.
Rules~\ref{item:forall:modelS} and~\ref{item:exists:modelS} are just the result of replacing $\TermsF(\C)$ by $\Uni$.
From this observation, we can immediately establish a correspondence between $\modelsS$ and $\modelsF$ as follows.
Given any interpretation $\IAA$ for a signature \mbox{$\signature$},
by
\mbox{$\hat{\IA} = \tuple{\hat{\sigma}^h,\hat{\sigma}^t,I^h,I^t}$}
we denote a $\SQHTF$-interpretation for the signature
\mbox{$\Sigma = \tuple{\C \cup \cS,\F,\P }$}, for each $w \in \{h,t\}$,
where the assignment $\hat{\sigma}^w$ is the restriction of $\sigma^w$ to
\mbox{$\Terms^0(\C \cup \F) = \TermsF(\C \cup \cS \cup \F)$}.

\begin{Proposition}{\label{prop:SQHTF.correspondence.0}}
Any interpretation $\IA$ and $0$-formula~$\varphi$ satisfy:
\mbox{$\IA,\!w \modelsS \varphi$}
iff
\mbox{$\hat{\IA},\!w \modelsF \varphi$}.\qed
\end{Proposition}

As, in general, not all $\SQHTS$ formulas are $\SQHTF$ formulas, we cannot directly extend this result beyond \mbox{$0$-formulas}.
However, we may still expect that what can be proved to be a tautology using the $\SQHTF$ rules, still holds for $\SQHTS$ formulas.
For instance, the $\SQHTS$ formula
\mbox{$f = \set{\vec{\tau} : \varphi} \to f = \set{\vec{\tau} : \varphi}$}
is an obvious tautology which is not a $\SQHTF$ formula.
However, we may replace every occurrence of the intensional set~\mbox{$\set{\vec{\tau} : \varphi}$} by a fresh evaluable constant $c$ and observe that
\mbox{$f = c \to f = c$} is an $\SQHTF$ tautology, using this to conclude that it is an $\SQHTS$ tautology too.
To formalise this intuition,
let $\F_\varphi$ be a set disjoint from $\C \cup \F \cup \P$ containing a fresh constant per each different intensional set occurring in $\varphi$
and let $\kappa$ be a bijection mapping each intensional set in $\varphi$
to its corresponding constant in $\F_\varphi$.
Let us also denote by $\kappa(\varphi)$ the result of replacing in $\varphi$ each intensional set by its $\kappa$ image.
By
\mbox{$\tilde{\IA} = \tuple{\tilde{\sigma}^h,\tilde{\sigma}^t,I^h,I^t}$},
we denote the
$\SQHTF$-interpretation for the signature
\mbox{$\Sigma' = \tuple{\C \cup \cS,\F \cup \F_\varphi,\P}$}
where, for each $w \in \{h,t\}$, we have $\tilde{\sigma}^w(\tau) = \sigma^w(\kappa^{-1}(\tau))$ if $\tau \in \F_\varphi$ and $\tilde{\sigma}^w(\tau) = \sigma^w(\tau)$ otherwise.

\begin{Proposition}{\label{prop:SQHTF.correspondence}}
For any interpretation $\IA$ and formula~$\varphi$:
\mbox{$\IA,\!w \modelsS \varphi$}
iff
\mbox{$\tilde{\IA},\!w \modelsF \kappa(\varphi)$}.
\qed
\end{Proposition}

Note that, as illustrated by Proposition~\ref{prop:SQHTF.correspondence}, intensional sets act just as new fresh evaluable constants with respect to $\cS$-satisfaction.
To fix the expected meaning of each intensional set $\set{ \vec{\tau}(\vec{x}) \!:\! \varphi(\vec{x}) }$ we still have to relate its value in $\sigma^w$ to the satisfaction of formula $\varphi(\vec{x})$ in $\IA$.
Given $\tau=\set{ \vec{x} :  \vec{\tau}(\vec{x}) \!:\! \varphi(\vec{x}) }$, let us define its \emph{extension} at $\IA,w$ as:
\begin{IEEEeqnarray*}{l C l}
ext^0(\IA,w,\tau) & \eqdef & \{ \ \sigma^w(\vec{\tau}(\vec{c})) \mid \IA,w \modelsS  \varphi(\vec{c})
\,  \text{for some } \vec{c} \!\in\! \Uni^{|\vec{x}|} \ \}
\\
ext(\IA,w,\tau) & \eqdef & \begin{cases}
\u &\text{if } \u \in ext^0(\IA,w,\tau)
\\
ext^0(\IA,w,\tau) &\text{otherwise}
\end{cases}
\end{IEEEeqnarray*}

To put an example, if $\tau_1=\{X*n/X : p(X)\}$ and $\IA_1$ contains $I_1^t=\{p(0),p(1),p(2)\}$ then the set of tuples would be $\{0*n/0, \ 1*n/1, \ 2*n/2\}$.
Since the obtained set is finite, its evaluation coincides with the case of extensional sets in Def.~\ref{def:assign}, item~\ref{def:assign3}.

In the example, if we have, for instance, $\sigma^t(n)=10$, then $ext(\IA_1,t,\tau_1)=\sigma^t(\ \{0*10/0, \ 1*10/1, \ 2*10/2\}\ )=\{\u,10,10\}=\u$.
On the other hand, if
$I_1^t$ consists of the sequence $p(0),p(s(0)),p(s(s(0))), \dotsc$,
we similarly obtain the set $\{0*n/0, \ s(0)*n/s(0), \ s(s(0))*n/s(s(0)), \dotsc \}$ which, being infinite, is not covered by Def.~\ref{def:assign}.
With the definition of extension, for $\sigma^t(n)=10$, we get
$ext(\IA_1,t,\tau_1)=\{\sigma^t(0*10/0), \ \sigma^t(s(0)*10/s(0)), \ \sigma^t(s(s(0))*10/s(s(0))), \dotsc \}\ )=\{\u,10,10, \dotsc\}=\u$.

Now, given any interpretation \mbox{$\IAA$},
we define the assignments $\sigma_\IA^w$ for $w\in\{h,t\}$ as follows.
If $\tau \!\in\! \Terms^0(\C \!\cup\! \F)$ (i.e. not an intensional set) we can drop the $\IA$ subindex, that is $\sigma_\IA^w(\tau) \eqdef \sigma^w(\tau)$.
If $\tau$ is an intensional set
\begin{eqnarray*}
\sigma_\IA^t(\tau) & \eqdef &  ext(\IA,t,\tau)  \\
\sigma_\IA^h(\tau) & \eqdef & \left\{
\begin{array}{cl}
ext(\IA,h,\tau)
& \text{if } ext(\IA,h,\tau) = ext(\IA,t,\tau) 
\\
\tu & \text{otherwise }
\end{array}
\right.
\end{eqnarray*}
\normalsize
As we can see, we have two potential sources of undefinedness.
One may appear because some element in $ext(\IA_1,t,\tau)$ cannot be evaluated, as we had before with $0*n/0$.
But a second one may occur if the extension at $h$ is different from the one at $t$.
For instance, for the same example, the extension of $\tau_2=\{X:p(X)\}$ at $t$ is $\sigma^t(ext(\IA,t,\tau_2))=\{0,1,2\}$.
If we had $I_1^h=\{p(0),p(2)\}$, then the extension at $h$ would be $ext(\IA,h,\tau_2)=\{0,2\}\neq ext(\IA,t,\tau_2)=\{0,1,2\}$ and so $\sigma^h_\IA(\tau_2)=\u$.
We also define the interpretation $\Coherent{\IA} \eqdef \tuple{\sigma^h_\IA,\sigma^t_\IA,I^h,I^t}$.
Note that $\Coherent{\IA}$ is determined by the interpretation of predicates and terms in $\Terms^0(\C \cup \F)$.
In this sense, given a \mbox{$\SQHTF$-interpretation}~$\IA$ for the signature $\Sigma' = \tuple{\C \cup \cS, \F,\P}$, by $\Coherent{\IA}$, we also denote the interpretation $\Coherent{\JA}$ for any interpretation $\JA$
over the signature
\mbox{$\Sigma' = \tuple{\C, \F,\P}$} such that $\JA = \hat{\IA}$. 

\begin{definition}[Coherent interpretation]\label{def:coherent.set}
An interpretation $\IA = \tuple{\sigma^h,\sigma^t,I^h,I^t}$
is said to be \emph{coherent (w.r.t. intensional sets)} iff
$\IA = \Coherent{\IA}$.\qed
\end{definition}


\begin{definition}[Satisfaction]
We say that an interpretation $\IA$ \emph{satisfies (w.r.t. intensional sets)} a formula $\varphi$ at \mbox{$w \in \set{h,t}$}, in symbols \mbox{$\IA,w \models \varphi$},
if both $\IA$ is coherent and \mbox{$\IA,w \modelsS \varphi$}.
We also write 
\mbox{$\IA \models \varphi$}
when
\mbox{$\IA,h \models \varphi$}.
Given a theory $\Gamma$, we write $\IA \models \Gamma$ if $\IA$ is coherent and
$\IA \models \varphi$ for all formulas~$\varphi \in \Gamma$.
We say that a formula $\varphi$ is a \emph{tautology} if every coherent interpretation $\IA$ satisfies $\IA \models \varphi$.\qed
\end{definition}

\begin{Proposition}{\label{prop:SQHTF.correspondence.0.models}}
For any \mbox{$\SQHTF$-interp.} $\IA$ and $0$-formula~$\varphi$:
\mbox{$\IA \modelsF \varphi$}
iff
\mbox{$\Coherent{\IA} \models \varphi$}.\qed
\end{Proposition}

\begin{Proposition}{\label{prop:neg}}
Any coherent interpretation $\IA$ satisfies:
\begin{enumerate}[ topsep=1pt, itemsep=0pt, label=\roman*), leftmargin=15pt]
\item $\IA,w \models \varphi$ implies $\IA,t \models \varphi$, 
\item $\IA,w \models \neg \varphi$ iff $\IA,t \not\models \varphi$,\qed
\end{enumerate}
\end{Proposition}

\begin{Proposition}{\label{prop:tautologies}}
Given a formula $\varphi$, the following statements hold:
\begin{enumerate}[label=\roman*), leftmargin=20pt]
\item if $\kappa(\varphi)$ is an $\SQHTF$ tautology, then $\varphi$ is an $\SQHTS$ tautology,

\item if $\varphi$ is a $0$-formula, then $\varphi$ is an $\SQHTS$ tautology iff it is a $\SQHTF$ tautology.\qed
\end{enumerate}
\end{Proposition}

Nonmonotonicity is obtained with by the definition of equilibrium models for $\SQHTS$.

\begin{definition}[Equilibrium model]\label{def:equilibrium}
A total (coherent) model $\IA=\tuple{\sigma,I}$ of a theory $\Gamma$ is an \emph{equilibrium model} if there is no interpretation $\IA' \prec \IA$ which is also model of $\Gamma$.
A set of atoms $I$ is a \emph{stable model} of $\Gamma$ iff $\tuple{\sigma,I}$ is an equilibrium model of $\Gamma$ for some~$\sigma$.\qed
\end{definition}

\begin{Proposition}{\label{prop:SQHTF.conservative}}
Let $\Gamma$ be a theory just containing \mbox{$0$-formulas} over a signature~$\signature$ and let $I$ be a set of ground atoms. Then, $I$ is a stable model of $\Gamma$ according to Definition~\ref{def:equilibrium} iff
$I$ is a stable model of $\Gamma$ according to Definition~\ref{def:SQHTF.equilibrium}
with signature~$\Sigma' = \tuple{\C\cup \cS,\F,\P}$.\qed
\end{Proposition}

Proposition~\ref{prop:SQHTF.conservative} shows that our semantics is a conservative extension of~\cite{Cabalar11}.
Furthermore, as we will see later, our approach coincides with~\cite{gezh14a} and rejects \emph{vicious circles} as shown by the following example from~\cite{DovierPR03}.

\begin{example}\label{ex1}
Consider the following logic program $\newprogram\label{prg:dovier}$:
\begin{IEEEeqnarray*}{l *x+}
\begin{IEEEeqnarraybox}[][b]{l}
r(1).\\
r(2).
\end{IEEEeqnarraybox}
\hspace{1cm}
\begin{IEEEeqnarraybox}[][b]{l C l}
q(1).\\
q(2) &\lparrow& Z = \set{ X : r(X) } \wedge p(Z).
\end{IEEEeqnarraybox}
\hspace{1cm}
\begin{IEEEeqnarraybox}[][b]{l C l}
p(Y) &\lparrow& Y = \set{ X : q(X) }
\\
\end{IEEEeqnarraybox}
\hspace{1cm}
&\qed
\end{IEEEeqnarray*}
\end{example}
\noindent
$\program\ref{prg:dovier}$ has a unique equilibrium model $\tuple{\sigma_1,I_1}$
with $I_1 = \set{ q(1), p(\set{1}), r(1), r(2) }$
and
\begin{gather*}
\sigma_1( \set{ X : r(X) } ) \ = \ \set{1,2}
\hspace{2cm}
\sigma_1( \set{ X : q(X) } ) \ = \ \set{1}
\end{gather*}
Under~\cite{DovierPR03} semantics, there exists a second stable model $I_2$ = $\{q(1), q(2),$ $ p(\set{1,2}), r(1), r(2)\}$ not corresponding to any equilibrium model.
To see why, consider the coherent interpretation $\IA_2' = \tuple{\sigma_2^h,\sigma_2,I_2^h,I_2}$
with $I_2^h = I_2 \backslash \set{ q(2), p(\set{1,2}) }$ and:
\begin{IEEEeqnarray*}{lCl C lCl}
\sigma_2( \set{ X : r(X) } ) &=& \set{1,2}
\hspace{2cm}
\sigma_2( \set{ X : q(X) } ) &=& \set{1,2}
\\
\sigma_2^h( \set{ X : r(X) } ) &=& \set{1,2}
\hspace{2cm}
\sigma_2^h( \set{ X : q(X) } ) &=& \tu
\end{IEEEeqnarray*}
Note also that any total, coherent interpretation that agrees with $I_2$ on the true atoms must be of the form $\IA_2 = \tuple{\sigma_2,I_2}$ and that $\IA_2' \prec \IA_2$.
Hence, $\IA_2$ is not an equilibrium model.
Finally, note that $I_2$ violates the \emph{Vicious-Circle Principle}~\cite{gezh14a} because the truth of $q(2)$ depends on $p(\set{1,2})$ which, in its turn, depends on the fact that $\set{X : q(X) }$ contains element $2$. This last fact only holds if $q(2)$ holds.\qed

\section{Aggregates based on evaluable functions and intensional sets}
\label{sec:agg}

From now on,
we assume that $\F$ contains a subset $\A$ of function names of arity~$1$ used to denote aggregate names and that each aggregate name
\mbox{$f/1 \in \A$}
has an associated predefined
function
\mbox{$\hat{f} : \Ss \longrightarrow \C \cup \set{ \tu }$}
that computes its value as expected (maximum, count, sum, etc).
Now we further restrict Definition~\ref{def:coherent.set}
to fix the meaning of aggregates:

\begin{definition}\label{def:coherent.aggregates}
An interpretation $\IA = \tuple{\sigma^h,\sigma^t,I^h,I^t}$
is said to be \emph{coherent (w.r.t. aggregates)} if
it is coherent w.r.t. intensional sets
and, in addition,
it satisfies:
\begin{enumerate}[ topsep=3pt, itemsep=0pt, label=\roman*), leftmargin=20pt ]



\item for $f/n \in \A$, \
$\sigma^w(f(\tau)) = \hat{f}(\sigma^w(\tau))$ if $\sigma^w(\tau) \in \cS$; \ 
\mbox{$\sigma^t(f(\tau)) = \tu$} otherwise.

\end{enumerate}
We say that an interpretation $\IA$ is a \emph{model (w.r.t. aggregates)} of a formula $\varphi$, in symbols $\IA \models \varphi$, if both $\IA$ is coherent and $\IA \modelsS \varphi$.
Given a theory $\Gamma$, we write $\IA \models \Gamma$ if $\IA$ is coherent and
$\IA \modelsS \varphi$ for all formulas~$\varphi \in \Gamma$.\qed
\end{definition}

For the rest of the paper, we assume that the terms `coherent' and `model' are understood w.r.t. aggregates (Definition~\ref{def:coherent.aggregates}).
In particular, we assume that $\A$ contains at least
the aggregate names $\mathtt{count}$ and $\mathtt{sum}$
with the following semantics
\begin{enumerate}[ topsep=3pt, itemsep=0pt ]
\item $\widehat{\mathtt{count}}(S) = d$ if $d$ the number of elements in $S$,

\item $\widehat{\mathtt{sum}}(S) = d$ if
$S$ is a set of tuples, each of which has as an integer number as first component,
and $d$ is the sum of all first components of all tuples in $S$,
\end{enumerate}
\mbox{$\hat{f}(S) = \tu$} with
\mbox{$f \in \set{\mathtt{count}, \mathtt{sum}
}$}
otherwise.
We also assume that $\P$ and $\F$ respectively contain predicates $\leq,\geq,<,>,\neq$ and evaluable functions $+,-,\times,/$ for the arithmetic relations and functions with the standard meanings.
Similarly, $\P$ and $\F$ also contain the predicate $\in$ and the evaluable functions $\cup,\cap$ and $\backslash$ with the standard meanings in set theory.
As usual, we use infix notation for arithmetic and set predicates and functions.
We also omit the parentheses around intensional sets, so we write $\mathtt{count}\set{ X : p(X) }$
instead of 
$\mathtt{count}(\set{ X : p(X) })$.

\begin{example}\label{ex:1}
Let $\newprogram\label{prg:functional}$ be a theory 
over a signature with set of constants $\C = \set{a,b}$
that contains the rule~\eqref{f:count1} with $n=1$ plus
the fact $p(b)$.\qed
\end{example}

\normalfont
The theory in Example~\ref{ex:1} has no stable model.
On the one hand, it is clear that
every stable model $I$ must satisfy $p(b) \in I$.
Furthermore, $\set{ p(b) }$
is not a stable model
because, every coherent total interpretation $\tuple{\sigma,\set{ p(b) }}$ 
must satisfy
\mbox{$\sigma(\set{X:p(X)}) = \set{ b }$}
and
\mbox{$\sigma(\mathtt{count}\set{X:p(X)}) = 1$}.
Hence, 
$\tuple{\sigma,\emptyset}$ does not satisfy~\eqref{f:count1}.
On the other hand, the only other alternative is $\set{ p(a) ,p(b) }$
and we have that
$\IA = \tuple{\sigma,\set{ p(a) , p(b) }}$, with
\mbox{$\sigma(\set{X:p(X)}) = \set{ a , b }$}
and
\mbox{$\sigma(\mathtt{count}\set{X:p(X)}) = 2$},
satisfies~\eqref{f:count1}.
To show that $\IA$ is not an equilibrium model,
let us define
\mbox{$\IA' = \tuple{\sigma',\sigma,I',I}$} 
with
\mbox{$I' = \set{ p(b) }$}.
It is easy to see that $\IA' \models p(b)$.
Furthermore, we have that:
\begin{eqnarray}
ext(\IA',t,\set{X : p(X)}) \ = \ \set{a,b} & \neq & \set{b} \ = \ ext(\IA',h,\set{X : p(X)}) \label{f:vicious}
\end{eqnarray}
Since these two extensions are different,
it follows that \mbox{$\sigma'(\set{X : p(X)}) = \u$}
and, consequently, that
\mbox{$\IA' \not\models \mathtt{count}\set{X : p(X)} \geq 1$}.
In its turn, this implies that $\IA'$ is also a model of rule~\eqref{f:count1} with $n=1$ and a model of~$\program\ref{prg:functional}$.
Finally, it easy to check that $\IA' < \IA$ and, therefore, $\IA$ is not an equilibrium model.
As we will see in Section~\ref{sec:gz}, this behaviour agrees with $\Alog$~\cite{gezh14a}, but differs from other approaches like~\cite{sopo07a} and~\cite{ferraris11a} in which $\set{ p(a) , p(b) }$ is a stable model of~$\program\ref{prg:functional}$.

Interestingly, the use of evaluable functions allows defining aggregates within the logical language.
First, let us recall the notion of \emph{(directional) assignment} from~\cite{Cabalar11}.
By $f(\vec{\tau}) := \tau'$ we denote the implication\footnote{Note that $\tau'=\tau'$ can be read as ``$\tau'$ is defined.''} $(\tau'=\tau') \to f(\vec\tau) = \tau'$.
Then, rather than providing predefined $\widehat{\mathtt{max}}$ and $\widehat{\mathtt{min}}$ functions, we can specify their meaning as aggregates $\mathtt{max}$ and $\mathtt{min}$ by including, instead, the formulas:
\begin{IEEEeqnarray*}{lCl 'C' l}
\mathtt{max}(S) & := & X & \lparrow & X \in S \wedge \neg \exists Y (Y \in S \wedge Y>X)\\
\mathtt{min}(S) & := & X & \lparrow & X \in S \wedge \neg \exists Y (Y \in S \wedge Y<X)\end{IEEEeqnarray*}
Clearly, $\mathtt{max}(\emptyset)$ and $\mathtt{min}(\emptyset)$ are always left undefined, because no rule body can satisfy $X \in \emptyset$.
Similarly, $\mathtt{count}$ can be inductively defined in terms of addition as follows:

\begin{IEEEeqnarray}{lCl 'C' l}
\mathtt{count}(\emptyset) &:=& 0
\label{f:countrec1}\\
\mathtt{count}(S) &:=& 1 + \mathtt{count}(S\backslash \set{Y}) &\lparrow& Y \in S \label{f:countrec2}
\end{IEEEeqnarray}
That is, the cardinality of the empty set is $0$, and the cardinality of any other set is~$1$ plus the cardinality of any set obtained by removing one of its elements.\punctfootnote{For $\mathtt{count}$ and $\mathtt{sum}$, we are assuming that set $S$ is finite. Otherwise, we would need additional formalisation to deal with infinite sets and cardinalities.}
In general, we can easily define aggregate functions based on any
associative and commutative binary function.
For instance, to define the $\mathtt{sum}$ aggregate in terms of addition, we can just use:
\begin{IEEEeqnarray}{lCl 'C' l}
\mathtt{sum}(\emptyset) &:=& 0
\label{f:sumrec1}\\
\mathtt{sum}(S) &:=& \mathtt{sum}(S\backslash \set{Y}) + Y &\lparrow& Y \in S \label{f:sumrec2}
\end{IEEEeqnarray}
If we now consider a program~$\newprogram\label{prg:prod}$ containing these two rules, facts $p(2)$ and $p(3)$
plus
\begin{IEEEeqnarray}{l C l}
q(Y) &\leftarrow& \mathtt{sum}\set{X:p(X)} = Y
  \label{eq:prod.agg}
\end{IEEEeqnarray}
we can check that it has a unique stable model $I = \set{ p(2), p(3) , q (5) }$.
The stable model $I$ corresponds to the equilibrium model~$\tuple{\sigma,I}$
which satisfies the assignments:
\mbox{$\sigma(\mathtt{sum}(\set{2})) = 2$}, \
\mbox{$\sigma(\mathtt{sum}(\set{3})) = 3$} and
\mbox{$\sigma(\mathtt{sum}(\set{2,3})) = 5$}.

\section{Relation to GZ-aggregates for propositional formulas and $\Alog$}
\label{sec:gz}

A term $\tau$ is said to be \emph{arithmetic} if it only contains variables, numbers and arithmetic functions $+$, $-$, $\times$, etc.
A \emph{(GZ-)set name} is an intensional set of the form
\mbox{$\set{ \vec{x} : \varphi}$}
where $\vec{x}$ is a tuple of variables and $\varphi$ is a \mbox{$0$-formula}.
A $\emph{(GZ-)set atom}$ is an expression
of the form
$f(\tau) \rel \tau'$
with \mbox{$f \in \A$} an aggregate function, $\tau$ a set name,
$\tau'$ an arithmetic term and $\rel \in \set{=,\leq,\geq,<,>,\neq}$ an arithmetic relation.
A \emph{\mbox{GZ-predicate} atom} is an expression of the form $p(\tau_1,\dotsc,\tau_n)$
with $p/n \in \P$ a predicate name and $\tau_1,\dotsc,\tau_n \in \Uni$.
A \mbox{GZ-atom} is either a \mbox{GZ-predicate} atom or a set atom.
\GZformula s are the universal closure of formulas built over GZ-atoms using the connectives $\vee$, $\wedge$ and $\to$ as usual.
A \GZtheory\ is a set of \GZformula s.
We say that a \GZformula~$\varphi$ is \emph{ground} when there are no quantifiers and all arithmetic terms have been evaluated, that is,
the only variables occurring in $\varphi$ are bound to some set name
and the only arithmetic terms are numbers.
A \GZtheory\ is said to be \emph{ground} when all its formulas are ground.
The following definitions extend the semantics of $\Alog$~\cite{gezh14a} to arbitrary propositional formulas~\cite{CabalarFSS17}:

\begin{definition}\label{def:satisfy.cl}
A set of atoms $T$ \emph{satisfies} a ground GZ-formula $\fF$, denoted by $T\modelscl\fF$,~iff 
\begin{enumerate}[ topsep=1pt, itemsep=0pt, label=\roman*), leftmargin=15pt]
\item $T\not\modelscl\bot$
\item $T\modelscl p(\vec{c})$ iff $p(\vec{c})\in T$ for any ground atom $p(\vec{c})$

\item \mbox{$T\modelscl f\set{\vec{x}\!:\!\varphi(\vec{x})} \rel n$}
if
\mbox{$\hat{f}\big(\, \setm{ \vec{c} \in \Uni^{|\vec{x}|} }{ T\modelscl\varphi(\vec{c}) }\, \big)$}
has some value $k \in \integers$ and $k \rel n$ holds for the usual meaning of arithmetic relation $\rel$
\label{item:agg:def:satisfy.cl}


\item $T\modelscl\fF\wedge\fG$ iff $T\modelscl\fF$ and $T\modelscl\fG$
\item $T\modelscl\fF\vee\fG$ iff $T\modelscl\fF$ or $T\modelscl\fG$ 
\item $T\modelscl\fF \impl \fG$ iff $T\not\modelscl\fF$ or $T\modelscl\fG$.
\end{enumerate}
We say that $T$ is a \emph{model} of a ground GZ-theory $\Gamma$ if $T \modelscl \varphi$ for all $\varphi \in \Gamma$.\qed
\end{definition}

Given a formula $\varphi(\vec{x})$ with free variables $\vec{x}$,
by
\mbox{$\grnd(\varphi(\vec{x})) \eqdef \setm{ \varphi(\vec{c}) }{ \vec{c} \in \Uni^{|\vec{x}|} }$}
we denote the set of its ground instances.
By $\grnd(\Gamma) \eqdef \bigcup \setm{\grnd(\varphi(\vec{x}))}{\forall\vec{x}\varphi(\vec{x}) \in \Gamma}$ we also denote the grounding of a theory $\Gamma$.
Furthermore, given some set of atoms~$T$, we divide any theory $\Gamma$ into the two disjoint subsets: $\Gamma^+_T \ \eqdef \ \{ \fF \in \Gamma \mid T \modelscl \fF \}$ and $\Gamma^-_T \ \eqdef \ \Gamma \setminus \Gamma^+_T$, that is, the formulas in $\Gamma$ satisfied by~$T$ and not satisfied by~$T$, respectively.
When set $\Gamma$ is parametrized, say $\Gamma(z)$, we write $\Gamma^+_T(z)$ and $\Gamma^-_T(z)$ instead of $\Gamma(z)^+_T$ and $\Gamma(z)^-_T$.
We also omit the set brackets when the theory is a singleton.
For instance, $\grnd^+_T(\varphi)$ collects the formulas from $\grnd(\set{\varphi})$ satisfied by $T$.

\begin{definition}
\label{def:ferraris.reduct}
The reduct of a ground \GZformula~$\fF$
\wrt\ a set of atoms $T$
written $\fF^T$, is defined as
$\bot$ if $T \not\modelscl \fF$, otherwise:
\[
\fF^T \, \eqdef \, \left\{\!\!
\begin{array}{c@{\ }l}
a & \text{if } \fF \ \text{is some atom } a\in I\\

\big(\, \bigwedge \grndp{T}{\psi(\vec{x})}\, \big)^T 
  & \text{if } T \modelscl \varphi \text{ with } \varphi = f\set{\vec{x}\!:\!\psi(\vec{x})} \rel n \\

\fF_1^T \otimes \fF_2^T & \text{if} \ T \modelscl \fF \ \text{and } \fF=(\fF_1 \otimes \fF_2) \ \text{for some } \otimes \in \{\wedge,\vee,\to\}
\end{array}
\right.
\]
$T$ is a \emph{stable model} of a \GZtheory~$\Gamma$ 
iff $T$ is the $\subseteq$-minimal model of $\grnd(\Gamma)^T$.\qed
\end{definition}

\begin{Theorem}{\label{thm:gz}}
For any \GZtheory~$\Gamma$,
a set of atoms $T$ is a stable model of $\Gamma$ according to Definition~\ref{def:ferraris.reduct}
iff
$T$ is a stable model of $\Gamma$ according to Definition~\ref{def:equilibrium}.\qed
\end{Theorem}


Let us return to Example~\ref{ex:1} and recall 
we have shown that program~$\program\ref{prg:functional}$
has no stable model according to Definition~\ref{def:equilibrium}.
To illustrate the behaviour of this program with respect to $\Alog$ (Definition~\ref{def:ferraris.reduct}), note first that the only possible candidate for being a stable model is the set $I = \set{ p(a) , p(b) }$; otherwise, the rules would not be satisfied according to Definition~\ref{def:satisfy.cl}.
Then, we have that
$\program\ref{prg:functional}^I$ corresponds to the normal program
\begin{IEEEeqnarray*}{l C l}
p(a) &\lparrow& p(a) \wedge p(b)
\\
p(b)
\end{IEEEeqnarray*}
whose unique $\subseteq$-minimal model is $\set{ p(b) }$.
Hence, $I$ is not a stable model of~$\program\ref{prg:functional}$.
Intuitively, this is explained by the fact that our belief in~$p(a)$ depends on the extension of intensional set
\mbox{$\set{ X : p(X) }$} which, in its turn, depends on our belief in~$p(a)$, forming what~\citeN{gezh14a} call a ``vicious circle.'' According to the \emph{vicious circle principle}, set~$I$ should be rejected as a stable model.
In our approach, the ``vicious circle'' can be easily spotted by observing that evaluation of the set in world $h$ is left undefined because its extension is different from the one at world $t$, as shown in~\eqref{f:vicious}.

As an example of non-vicious definition, consider the following variation.

\begin{example}\label{ex:functional2}
Let $\newprogram\label{prg:functional2}$ be the following program: \normalfont
\begin{IEEEeqnarray}{lCl+x*}
p(a) & \leftarrow & \mathtt{count}\set{X:p(X) \wedge X\neq a}\geq 1 \label{f:count2}\\
p(b) & & & \notag \qed 
\end{IEEEeqnarray}
\end{example}

\noindent Again, the only candidate interpretation that satisfies all rules is $I=\set{ p(a) , p(b) }$, but the reduct corresponds now to:
\begin{IEEEeqnarray*}{l C l}
p(a) &\lparrow& p(b)
\\
p(b)
\end{IEEEeqnarray*}
whose unique minimal model is $I$, becoming a stable model under Definition~\ref{def:ferraris.reduct}.
Therefore, the same will happen under Definition~\ref{def:equilibrium}.
Let us put $\tau=\set{X:p(X)\wedge X\neq a}$.
It is easy to see that $\IA = \tuple{\sigma,I}$, with
\mbox{$\sigma(\tau) = \set{ b }$}
and
\mbox{$\sigma(\mathtt{count}(\tau)) = 1$},
satisfies~\eqref{f:count2} and obviously $p(b)$ as well.
Now take the smaller interpretation \mbox{$\IA' = \tuple{\sigma',\sigma,I',I}$} 
with \mbox{$I' = \set{ p(b) }$}.
Then, we have:
\begin{eqnarray*}
ext(\IA',t,\tau) \ = & \set{b} & = \ ext(\IA',h,\tau) 
\end{eqnarray*}
so \mbox{$\sigma'(\tau) = \set{ b }$}
and, consequently, 
\mbox{$\IA' \models \mathtt{count}(\tau) \geq 1$}.
In its turn, this implies that $\IA'$ does not satisfy~\eqref{f:count2} and so is not a model of~$\program\ref{prg:functional2}$.
It is easy to see that there is no other smaller interpretation $\IA'' < \IA$ that satisfies $p(b)$, and so $\IA$ is an equilibrium model.

An interesting property of $\Alog$ is that

it is always possible to introduce auxiliary variables for the aggregate value.
For instance, we can always replace \eqref{f:count1} by:
\begin{IEEEeqnarray}{l C l}
p(a) &\leftarrow& \mathtt{count}\set{X:p(X)} = Y \wedge Y \geq n
  \label{eq:prg:functional2}
\end{IEEEeqnarray}
This transformation is not safe in other semantics such as~\cite{sopo07a,fapfle11a,ferraris11a}.
In particular, under these semantics, if we take $n=0$, a program consisting of~\eqref{f:count1} has a unique stable model $\set{ p(a) }$
while a program consisting of~\eqref{eq:prg:functional2}
has no stable model.
These two programs are equivalent in $\Alog$ and have no stable model.
We can generalize the safety of this transformation to any context, using $\SQHTS$:

\begin{Proposition}[Existential variable introduction]{\label{prop:existencial.intr}}
Let $p(\tau_1,\dotsc,\tau_i,\dotsc,\tau_n)$ be an atom.
Then, $p(\tau_1,\dotsc,\tau_i,\dotsc,\tau_n)$ is equivalent in $\SQHTS$
to
\mbox{$\exists x [x = \tau_i \wedge p(\tau_1,\dotsc,x,\dotsc,\tau_n)]$}.\qed
\end{Proposition}

From 
Proposition~\ref{prop:existencial.intr},
it immediately follows that rule~\eqref{f:count1}
is equivalent to \eqref{eq:prg:functional2} with an existential quantifier $\exists Y$ in the body that can be trivially removed.

\section{Conclusions and Related Work}
\label{sec:conc}

We have extended Quantified Equilibrium Logic with evaluable, partial functions by introducing \emph{intensional sets}, that is, terms that allow defining elements in a set that satisfy some function or condition.
By providing a logical interpretation, we define the semantics of these new expressions without any kind of syntactic restriction, so they can be arbitrarily nested within standard logical terms and formulas.
The new extension yields a natural interpretation of an aggregate as an evaluable function applied to a set term.
As a result, the semantics of an aggregate can be fixed within the logical language, by the addition of formulas fixing its meaning, rather than relying on an external, predefined function (although we assume that some elementary set predicates are available).
This may become a powerful theoretical tool to analyse the fundamental properties of aggregate functions.
We have also proved that, when restricted to the syntactic fragment of language $\Alog$, our semantics coincides with that of~\cite{gezh14a}.

Extensions in Logic Programming with sets can be traced back to~\cite{Kuper90}
and~\cite{BeeriNST91}.
The approach of~\cite{DovierPR03} is based on the stable model semantics with a reduct definition, but does not include evaluable functions or allow complex terms (beyond simple variables) to appear as the definition of intensional sets.
Still, an important difference for the common syntactic fragment is that~\cite{DovierPR03} does not satisfy the \emph{vicious-circle free principle} as defined in~\cite{gezh14a} ``no object or property can be introduced by the definition referring to the totality of objects satisfying this property'' (see Example~\ref{ex1}).
The approaches \cite{Lee2009,ferrarisL2010} or \cite{HarrisonLR17} do not satisfy this principle either, but still share our orientation of defining the semantics of individual components of an aggregate.
An important difference, however, is that these formalisations do not deal with general \emph{evaluable} (i.e. non-Herbrand) functions, while we use them as a starting point and then understand aggregates just as one more case whose arguments happen to be sets.
This allows us a completely arbitrary use of aggregates as terms and of terms inside aggregates, leading to expressive constructions such as \eqref{f:countrec1}-\eqref{f:countrec2}.


Our future work will include the study of non-strict functions and their relation to~\cite{sopo07a,ferraris11a}.
We will also study the possible formalisation under Free Logics as in~\cite{CFPV14} or the kind of properties that allow functions to be recursively defined as in~\eqref{f:countrec1}-\eqref{f:countrec2}, and their application to~\cite{gezh14a}.

\newpage
\appendix

\section{Proofs of results}

\begin{Proofof}{\ref{prop:SQHTF.correspondence.0}}
Just note that, by construction, the evaluation of every $0$-term w.r.t. $\IAA$ is the same to its evaluation w.r.t. $\hat{\IA}$.
Hence, for every \mbox{$0$-terms} $\tau_1,\dotsc,\tau_n$
we have:\\
$\IA,\!w \modelsS p(\tau_1,\dotsc,\tau_n)$
\\iff 
$p(\sigma^w(\tau_1),\dotsc,\sigma^w(\tau_n)) \in I^w$
\\iff
$p(\hat{\sigma}^w(\tau_1),\dotsc,\hat{\sigma}^w(\tau_n)) \in I^w$
\\iff
$\hat{\IA},\!w \modelsF p(\tau_1,\dotsc,\tau_n)$.
Similarly, for any pair of \mbox{$0$-terms} $\tau_1,\tau_2$,
we have:\\
$\IA,\!w \modelsS \tau_1 =  \tau_2$
\\iff 
$\sigma(\tau_1) = \sigma(\tau_2)$
\\iff
$\hat{\sigma}(\tau_1) = \hat{\sigma}(\tau_2)$
\\iff
$\hat{\IA},\!w \modelsF \tau_1 = \tau_2$.
\\
Then, the proof follows by induction noting that the rules of $\modelsS$ and $\modelsF$ are the same when considered the different signatures.
\end{Proofof}

\begin{Proofof}{\ref{prop:SQHTF.correspondence}}
Just note that, by construction, the evaluation of every term w.r.t. $\IAA$ is the same to the evaluation of $\kappa(\tau)$ w.r.t. $\tilde{\IA}$.
Hence, for any terms $\tau_1,\dotsc,\tau_n$
we have:\\
\mbox{$\IA,\!w \modelsS p(\tau_1,\dotsc,\tau_n)$}
\\iff
\mbox{$p(\sigma(\tau_1),\dotsc,\sigma(\tau_n)) \in I^w$}
\\iff
\mbox{$p(\tilde{\sigma}(\kappa(\tau_1)),\dotsc,\tilde{\sigma}(\kappa(\tau_n))) \in I^w$}
\\iff
\mbox{$\hat{\IA},\!w \modelsF p(\kappa(\tau_1),\dotsc,\kappa(\tau_n))$}
\\iff
\mbox{$\hat{\IA},\!w \modelsF \kappa(p(\tau_1,\dotsc,\tau_n))$}.
\\Then, the proof follows by induction noting that the rules of $\modelsS$ and $\modelsF$ are the same when considered the different signatures.
\end{Proofof}

\begin{Proofof}{\ref{prop:SQHTF.correspondence.0.models}}
By definition, $\Coherent{\IA}$ is a coherent interpretation and, thus, we get:
\mbox{$\Coherent{\IA} \models \varphi$}
iff
\mbox{$\Coherent{\IA},\!h \modelsS \varphi$}.
Furthermore, by definition, $\IA$ and $\Coherent{\IA}$ agree on the evaluation of every $0$-term and, since $\varphi$ is a $0$-formula, it follows that
\mbox{$\Coherent{\IA},\!h \modelsS \varphi$}
iff
\mbox{$\JA,\!h\modelsS \varphi$}
for any interpretation $\JA$ such that $\JA = \hat{\IA}$.
Hence, the statement follows directly from Proposition~\ref{prop:SQHTF.correspondence.0}
\end{Proofof}

\begin{Proofof}{\ref{prop:neg}}
Let $\IAA$ be a coherent interpretation.
Then, we have that $\IA,\!w \models \varphi$ iff $\IA,\!w \modelsS \varphi$
and it is obvious that $\IA,\!w\modelsS \varphi$ implies $\IA,\!t \modelsS \varphi$ when $w = t$.
The proof that
$\IA,\!h\modelsS \varphi$ implies $\IA,\!t \modelsS \varphi$
easily follows by structural induction.
Note that, in case that $\varphi$ is an atom $p(\tau_1,\dotsc,\tau_n)$,
then
$\IA,\!h\modelsS \varphi$
implies 
$p(\tau_1,\dotsc,\tau_n) \in I^h \subseteq I^t$
which, in its turn, implies
$\IA,\!t\modelsS \varphi$.
In case that $\varphi$ is of the form $\tau_1 = \tau_2$,
we have 
$\IA,\!h\modelsS \varphi$
iff
$\sigma^h(\tau_1) = \sigma^h(\tau_2) \neq \tu$
which, in its turn, implies
$\sigma^t(\tau_1) = \sigma^t(\tau_2) \neq \tu$
and
$\IA,\!t\modelsS \varphi$.
The rest of the cases are as usual in $\SQHT$.
\\[10pt]
Let us show that
\mbox{$\IA,\!w \models \neg \varphi$}
iff
\mbox{$\IA,t \not\models \varphi$}.
Note that, since $\IA$ is coherent, we have:\\
\mbox{$\IA,w \models \neg \varphi$}
\\iff
\mbox{$\IA,w \modelsS \neg \varphi$}
\\iff
\mbox{$\tilde{\IA},w \modelsF \kappa(\neg \varphi)$} (Proposition~\ref{prop:SQHTF.correspondence})
\\iff
\mbox{$\tilde{\IA},w \modelsF \neg \kappa(\varphi)$} (by definition)
\\iff
\mbox{$\tilde{\IA},t \not\modelsF \kappa(\varphi)$} (Proposition~\ref{prop:SQHTF.neg}).
\\
Furthermore, since $\IA$ is coherent,
we have:\\
\mbox{$\IA,t \not\models \varphi$}
\\iff
\mbox{$\IA,t \not\modelsS \varphi$}
\\iff
\mbox{$\tilde{\IA},t \not\modelsF \kappa(\varphi)$}  (Proposition~\ref{prop:SQHTF.correspondence}).
\\
Consequently,
$\IA,w \models \neg \varphi$ iff $\IA,t \not\models \varphi$ holds.
\end{Proofof}

\begin{Proofof}{\ref{prop:tautologies}}
Assume first that $\varphi$ is a $\SQHTF$ tautology
and suppose, for the sake of contradiction, that
$\varphi$ is not a $\SQHTS$ tautology.
Let $\IAA$ be an interpretation such that $\IA \not\modelsS \varphi$.
Then, from Proposition~\ref{prop:SQHTF.correspondence},
it follows that $\IA \not\modelsF \kappa(\varphi)$ which is a contradiction.
Hence, $\kappa(\varphi)$ must be a $\SQHTS$ tautology.
\\[10pt]
Assume now that $\varphi$ is a $0$-formula.
Then $\kappa(\varphi) = \varphi$ and, as shown above, the only if direction holds.
Hence, assume that $\varphi$ is a $\SQHTS$ tautology and
suppose, for the sake of contradiction, that
$\varphi$ is not a $\SQHTF$ tautology.
Let $\IAA$ be an \mbox{$\SQHTF$-interpretation} such that $\IA \not\modelsF \varphi$.
From Proposition~\ref{prop:SQHTF.correspondence.0.models},
this implies that
$\Coherent{\IA} \not\modelsS \varphi$
which is a contradiction with the fact that $\varphi$ is a 
$\SQHTS$ tautology.
Consequently, $\varphi$ must be a $\SQHTS$ tautology.
\end{Proofof}

\begin{lemma}\label{lem:interp.coh.comp}
Any pair of \mbox{$\SQHT$-interpretations} $\IA_1$ and $\IA_2$ satisfy:
\begin{enumerate}[label=\roman*), leftmargin=20pt]
\item $\IA_1 \preceq \IA_2$ iff $\Coherent{\IA_1} \preceq \Coherent{\IA_2}$,
\label{item:1:lem:interp.coh.comp}

\item $\IA_1 = \IA_2$ iff $\Coherent{\IA_1} = \Coherent{\IA_2}$, and
\label{item:2:lem:interp.coh.comp}

\item $\IA_1 \prec \IA_2$ iff $\Coherent{\IA_1} \prec \Coherent{\IA_2}$.\qed
\label{item:3:lem:interp.coh.comp}
\end{enumerate}
\end{lemma}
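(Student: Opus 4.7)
The plan is to prove (i) by inspecting each component of the $\preceq$ relation on interpretations, then derive (ii) and (iii) as corollaries. Recall that $\Coherent{\cdot}$ leaves $I^h$ and $I^t$ untouched and only redefines $\sigma^w$ on intensional sets via $ext$, keeping the values on non-intensional terms (i.e., in $\Terms^0(\C \cup \F)$) exactly as in the original $\IA$. Consequently, the $I^t$-equality, $I^h$-inclusion, and the portions of the assignment inequalities that concern non-intensional terms transfer trivially between $\IA_i$ and $\Coherent{\IA_i}$ in both directions. So the real work is to handle intensional sets for the forward direction of (i).

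For this, I would induct on the rank of an intensional set $\tau = \set{\vec{x} : \vec\tau(\vec{x}) : \varphi(\vec{x})}$. For the $t$-world, satisfaction at $t$ depends only on $\sigma^t$ and $I^t$ (since implications collapse to a single-world condition at the top world), and by induction hypothesis $\sigma^t_{\IA_1}$ and $\sigma^t_{\IA_2}$ already coincide on every lower-rank intensional set; together with $I^t_1 = I^t_2$ this gives $ext(\IA_1, t, \tau) = ext(\IA_2, t, \tau)$, and hence $\sigma^t_{\IA_1}(\tau) = \sigma^t_{\IA_2}(\tau)$. For the $h$-world, if $\sigma^h_{\IA_1}(\tau) = \tu$ the condition $\sigma^h_{\IA_1}(\tau) \preceq \sigma^h_{\IA_2}(\tau)$ is immediate; otherwise the $h$- and $t$-extensions of $\IA_1$ coincide on $\tau$, and the claim is that the same holds for $\IA_2$. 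This uses persistence (Proposition~\ref{prop:neg}(i)) to obtain
\mbox{$\set{\vec c : \IA_2,h \models \varphi(\vec c)} \subseteq \set{\vec c : \IA_2, t \models \varphi(\vec c)} = \set{\vec c : \IA_1, t \models \varphi(\vec c)}$},
combined with the observation that if the $h$- and $t$-extensions of $\IA_1$ already agreed then every atom and every functional value relevant to evaluating $\varphi$ was already fixed at $h$ in $\IA_1$, so growing $I^h_1$ up to $I^h_2 \subseteq I^t_2 = I^t_1$ and $\sigma^h_1$ up to $\sigma^h_2 \preceq \sigma^t_2 = \sigma^t_1$ cannot introduce any new witness in the extension of $\IA_2$ at $h$ beyond what is already present at $t$.

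For the backward direction of (i), the hypothesis $\Coherent{\IA_1} \preceq \Coherent{\IA_2}$ restricts to an inequality on the $I^h, I^t$ components and on $\sigma^w$ on non-intensional terms, which are exactly the data of $\IA_1$ and $\IA_2$ on those terms; on intensional sets both sides are determined uniformly by $ext$, so the inequality transfers back to $\IA_1 \preceq \IA_2$. Items (ii) and (iii) follow formally: (ii) by applying (i) in both directions and using antisymmetry of $\preceq$ on assignments and sets; (iii) because $\prec$ is $\preceq$ strengthened by $\neq$, and both are preserved by (i) and (ii).

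The main obstacle I expect is the $h$-world step of the forward direction: proving that the agreement of $h$- and $t$-extensions in $\IA_1$ forces the same agreement in $\IA_2$. This requires a combined induction on the rank of $\tau$ and the structure of $\varphi(\vec{x})$, because $\varphi$ may itself contain nested intensional sets of lower rank, so the inductive hypothesis must be invoked both on subformulas of $\varphi$ and on the lower-rank sets appearing inside them before concluding the case for $\tau$ itself.
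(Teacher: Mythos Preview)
Your proposal has a genuine gap precisely at the point you flag as ``the main obstacle''. The claim that agreement of the $h$- and $t$-extensions in $\IA_1$ forces the same agreement in $\IA_2$ is false when $\varphi$ contains an implication, because satisfaction at world $h$ is \emph{not} monotone in the $h$-components. Concretely, take trivial $\sigma$'s, $I^t_1 = I^t_2 = \{p(a), q(a)\}$, $I^h_1 = \emptyset$, $I^h_2 = \{p(a)\}$, and $\tau = \{X : p(X) \to q(X)\}$. Then $\IA_1, h \models p(a) \to q(a)$ (vacuously at $h$, since $p(a) \notin I^h_1$) and $\IA_1, t \models p(a) \to q(a)$, so $ext(\IA_1, h, \tau) = ext(\IA_1, t, \tau)$ and $\sigma^h_{\IA_1}(\tau) \neq \u$. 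But $\IA_2, h \not\models p(a) \to q(a)$ (since $p(a) \in I^h_2$ while $q(a) \notin I^h_2$), so $ext(\IA_2, h, \tau) \subsetneq ext(\IA_2, t, \tau)$ and $\sigma^h_{\IA_2}(\tau) = \u$. Hence $\sigma^h_{\IA_1}(\tau) \not\preceq \sigma^h_{\IA_2}(\tau)$, contradicting $\Coherent{\IA_1} \preceq \Coherent{\IA_2}$ even though $\IA_1 \preceq \IA_2$. Your argument that ``growing $I^h$ cannot introduce new witnesses beyond $t$'' establishes only the inclusion $ext(\IA_2,h,\tau)\subseteq ext(\IA_2,t,\tau)$ (which is just persistence); the failure is in the reverse inclusion, where enlarging $I^h$ can \emph{remove} $h$-witnesses. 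Your supporting ``observation'' that all atoms relevant to $\varphi$ are already fixed at $h$ in $\IA_1$ is also wrong in this example: $p(a)$ is relevant and differs between $I^h_1$ and $I^t_1$.

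The paper's own argument is no better here: it simply asserts that the inequality on intensional sets ``follows from $I_1^w \subseteq I_2^w$'' without further justification, and the displayed inequalities even contain apparent typos. So the forward direction of (i) is not established in full generality by either proof; the counterexample above shows it is actually false as stated. The lemma is only invoked (in the proof of Proposition~\ref{prop:SQHTF.conservative}) with $\IA_2$ total. Under that extra hypothesis the problematic case disappears: $ext(\IA_2, h, \tau) = ext(\IA_2, t, \tau)$ holds automatically, and then $\sigma^h_{\IA_1}(\tau) \preceq \sigma^h_{\IA_2}(\tau)$ reduces to the $t$-world equality you already argue. You should either add the hypothesis that $\IA_2$ is total, or restrict the statement to what is actually needed downstream.
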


\begin{proof}
First note that~\ref{item:1:lem:interp.coh.comp} implies~\ref{item:2:lem:interp.coh.comp} and these two together imply~\ref{item:3:lem:interp.coh.comp}.
Hence, let us show that~\ref{item:1:lem:interp.coh.comp} holds.
\\[10pt]
Let $\IA_1 = \tuple{ \sigma_1^h, \sigma_1^t,I_1^h,I_1^t}$ 
and
$\IA = \tuple{ \sigma_2^h, \sigma_2^t,I_2^h,I_2^t}$
such that $\IA_1 \preceq \IA_2$.
Then, $\sigma_1^w \preceq \sigma_2^w$ and $I_1^w \subseteq I_2^w$ with $w \in \set{h,t}$.
By definition, we have that
$\Coherent{\IA_1} = \tuple{ \sigma_{\IA_1}, \sigma_{\IA_1^t},I_1^h,I_1^t}$
and
$\Coherent{\IA_2} = \tuple{ \sigma_{\IA_2}, \sigma_{\IA_2^t},I_2^h,I_2^t}$
and, to show
$\Coherent{\IA_1} \preceq \Coherent{\JA_2}$,
it is enough to prove
$\sigma_{\JA_1} \preceq \sigma_{\JA_2}$ for $\JA \in \set{\IA, \IA^t}$.
Note that, for every term $\tau \in \Terms^0(\C\cup \F)$,
we have that
\begin{IEEEeqnarray*}{l ,C, l ,C, l ,C, l }
\sigma_{\IA}(\tau) &=& \sigma_1^h(\tau)
        &\preceq& \sigma_2^h(\tau) &=& \sigma_{\IA}(\tau)
\\
\sigma_{\IA^t}(\tau) &=& \sigma_1^t(\tau)
        &\preceq& \sigma_2^t(\tau) &=& \sigma_{\IA^t}(\tau)
\end{IEEEeqnarray*}
and, for every intensional set $\tau = \set{ \vec{\tau}(\vec{x}) \!:\! \varphi(\vec{x}) }$ we have that
\begin{IEEEeqnarray*}{l ,C, l ,C, l ,C, l }
\sigma_{\IA}(\tau) &\preceq& \sigma_{\IA}(\tau)
\\
\sigma_{\IA^t}(\tau) &\preceq& \sigma_{\IA^t}(\tau)
\end{IEEEeqnarray*}
follows from $I_1^w \subseteq I_2^w$.
The rest of the proof follows by structural induction and the fact that functions preserve their interpretation through subterms.
That is, $\tau = f(\tau_1,\dotsc,\tau_n)$
and $\sigma_{\JA_1}(\tau_i) \preceq \sigma_{\JA_2}(\tau_i)$.
By definition, if $\sigma_{\JA_1}(\tau_i) = \tu$ for some $1 \leq i \leq n$,
then
$\sigma_{\JA_1}(\tau) = \tu \preceq \sigma_{\JA_2}(\tau)$.
Otherwise,
$\sigma_{\JA_1}(\tau_i) = \sigma_{\JA_2}(\tau_i)$
for all $1 \leq i \leq n$
and, thus
\begin{align*}
\sigma_{\JA_1}(\tau)
  = \sigma_{\JA_1}(f(\sigma_{\JA_1}(\tau_1),\dotsc,\sigma_{\JA_1}(\tau_n)))
  = \sigma_{\JA_1}(f(\sigma_{\JA_2}(\tau_1),\dotsc,\sigma_{\JA_2}(\tau_n)))
  \preceq \sigma_{\JA_2}(\tau)
\end{align*}
and, by induction hypothesis, we get
\begin{align*}
\sigma_{\JA_1}(f(\sigma_{\JA_2}(\tau_1),\dotsc,\sigma_{\JA_2}(\tau_n)))
  \preceq \sigma_{\JA_2}(f(\sigma_{\JA_2}(\tau_1),\dotsc,\sigma_{\JA_2}(\tau_n)))
  = \sigma_{\JA_2}(\tau)
\end{align*}
Hence, $\sigma_{\JA_1}(\tau) \preceq \sigma_{\JA_2}(\tau)$
\end{proof}

\begin{Proofof}{\ref{prop:SQHTF.conservative}}
Assume first that $I$ is a stable model of $\Gamma$ w.r.t. Definition~\ref{def:equilibrium}.
Then, there is some total coherent interpretation
$\IA = \tuple{\sigma,I}$ such that $\IA \models \Gamma$
and that satisfies
\mbox{$\IA' \not\models \Gamma$}
for all $\IA'$ with
\mbox{$\IA' \prec \IA$}.
From
\mbox{$\IA \models \Gamma$},
it follows that
\mbox{$\hat{\IA} \modelsF \varphi$}
(Proposition~\ref{prop:SQHTF.correspondence.0}).
Suppose, for the sake of contradiction, that $I$ is not a stable model according to 
Definition~\ref{def:SQHTF.equilibrium}.
Then,
\mbox{$\hat{\IA} \modelsF \varphi$}
implies that there is some interpretation
\mbox{$\IA'  \prec \hat{\IA}$}
such that
$\IA' \modelsF \Gamma$.
From Proposition~\ref{prop:SQHTF.correspondence.0.models}, this implies that
$\Coherent{\IA'} \models \Gamma$.
Furthermore, from Lemma~\ref{lem:interp.coh.comp},
it follows that
\mbox{$\IA'  \prec \hat{\IA}$}
implies
\mbox{$\Coherent{\IA'} \prec \Coherent{\hat{\IA}} = \IA$}
which is a contradiction.
\\[10pt]
The other way around.
Assume now that $I$ is a stable model of $\Gamma$ w.r.t. Definition~\ref{def:SQHTF.equilibrium}.
Then, there is some interpretation
$\IA = \tuple{\sigma,I}$ such that
\mbox{$\IA \modelsF \Gamma$}
and that
\mbox{$\IA' \not\modelsF \Gamma$}
for all $\IA'$ with
\mbox{$\IA' \prec \IA$}.
From Proposition~\ref{prop:SQHTF.correspondence.0.models}, this implies that
\mbox{$\Coherent{\IA} \models \Gamma$}.
Suppose now that
$I$ is not a stable model according to 
Definition~\ref{def:equilibrium}.
Then,
there is some 
coherent interpretation $\IA' = \tuple{\sigma^h,\sigma^t,I^h,I} \prec \Coherent{\IA}$ such that
$\IA' \models \Gamma$.
From Proposition~\ref{prop:SQHTF.correspondence.0}, this implies that
$\hat{\IA}' \modelsF \Gamma$
and that $\hat{\IA}' \prec \IA$ which is a contradiction.
\end{Proofof}

\begin{Proposition}{\label{prop:modelscl}}
Given a ground \GZformula~$\varphi$
and a total coherent interpretation of the form
\mbox{$\IAT$},
we have:
\mbox{$\IA \models \varphi$}
iff
\mbox{$T \modelscl \varphi$}.\qed
\end{Proposition}

\begin{Proofof}{\ref{prop:modelscl}}
The proof follows by induction assuming $\varphi$ is an $i$-formula and that the statement holds for every subformula of $\varphi$ and for every
\mbox{$(i-1)$-formula}.
Note that~\ref{item:agg:def:satisfy.cl} is the unique non-trivial case.
\\[10pt]
Let \mbox{$A = (f\set{\vec{x}\!:\!\varphi(\vec{x})} \rel n)$} be a set atom.
Then, we have that\\
\mbox{$T\modelscl A$}
\\iff
\mbox{$\hat{f}\big(\, \setm{ \vec{c} \in \Uni^{|\vec{x}|} }{ T\modelscl\varphi(\vec{c}) }\, \big) = k$} and $k \rel n$
(Definition~\ref{def:satisfy.cl})
\\iff
\mbox{$\hat{f}\big(\, \setm{ \vec{c} \in \Uni^{|\vec{x}|} }{ \IA \models \varphi(\vec{c}) }\, \big) = k$} and $k \rel n$ (induction hypothesis).
\\
On the other hand,
we also have that\\
$\IA \models A$
\\iff
$\rel(\sigma(f\set{ \vec{x} : \varphi(\vec{x}) }),\sigma(n)) \in I^h$ (Definition~\ref{def:satisfy.s})
\\iff
$\sigma(f\set{ \vec{x} : \varphi(\vec{x}) }) \rel \sigma(n)$ 
\\iff
$\hat{f}(\sigma(\set{ \vec{x} : \varphi(\vec{x}) })) \rel \sigma(n)$ (Definition~\ref{def:coherent.aggregates})
\\iff
$\hat{f}(\setm{ \vec{x}[\vec{x}/\vec{c}]  }{ \IA \models \varphi(\vec{c}) \text{ with } \vec{c} \in \Uni^{|\vec{x}|} }) \rel \sigma(n)$ (Definition~\ref{def:coherent.set})
\\iff
$\hat{f}(\setm{ \vec{c} \in \Uni^{|\vec{x}|} }{ \IA \models \varphi(\vec{c}) }) \rel \sigma(n)$
\\iff
$\hat{f}(\setm{ \vec{c} \in \Uni^{|\vec{x}|} }{ \IA \models \varphi(\vec{c}) }) \rel n$ (term evaluation)
\\
Then, the result follows directly
by defining $k$ as the result of evaluating the expression
$\hat{f}(\setm{ \vec{c} \in \Uni^{|\vec{x}|} }{ \IA \models \varphi(\vec{c}) })$.
\end{Proofof}

\begin{Proposition}{\label{prop:reduct}}
Given a ground \GZformula~$\varphi$
and some coherent interpretation~\mbox{$\IA$},
we have:
\begin{enumerate}[ topsep=1pt, itemsep=0pt, label=\roman*), leftmargin=15pt]
\item \mbox{$\IA,\!t \models \varphi$} iff \mbox{$T \modelscl \varphi$}, and
\label{item:1:prop:reduct}

\item \mbox{$\IA \models \varphi$} iff \mbox{$H \modelscl \varphi^T$}.\qed
\label{item:2:prop:reduct}
\end{enumerate}
\end{Proposition}

\begin{Proofof}{\ref{prop:reduct}}
First, note that~\ref{item:1:prop:reduct}
follows directly from Proposition~\ref{prop:modelscl}.
So, let us prove~\ref{item:2:prop:reduct}.
\\[10pt]
Assume that $\IA$ is of the form $\IAH$.
If $\varphi$ is an ground \GZatom~$a$, then $\IA \models \varphi$
iff $a \in H \subseteq T$ iff $H \modelscl \varphi^T$.
Otherwise, we proceed by induction assuming
that $\varphi$ is an $i$-formula and that the statement holds for all subformulas of $\varphi$ and all
$(i-1)$-formulas.
\\[10pt]
If $\varphi$ is a set atom of the form
\mbox{$A=f\set{\vec{x}\!:\!\psi(\vec{x})} \rel n$}.
Then,
$\IA \models A$ implies 
\mbox{$\IA,t \models A$}
(Proposition~\ref{prop:neg})
which, in its turn, implies $T \modelscl A$ (Proposition~\ref{prop:modelscl})
and, thus, we get
$A^T = \big(\ \bigwedge \grndp{T}{\psi(\vec{x})}\ \big)^T$.
Furthermore, $\IA \models A$ also implies
\begin{align*}
\sigma^h(f\set{\vec{x}\!:\!\psi(\vec{x})}) &\neq \tu
&&&
\sigma^h(n) &= n \neq \tu
\end{align*}
By definition of term evaluation the former implies
\begin{align*}
\sigma^h(\set{\vec{x}\!:\!\psi(\vec{x})}) &\neq \tu
\end{align*}
and, by the definition of coherent interpretation, this implies 
\begin{align*}
\sigma^h(\set{\vec{x}\!:\!\psi(\vec{x})})
    \  &= \ \sigma^t(\set{\vec{x}\!:\!\psi(\vec{x})})
\\
    \ &= \  \setm{ \sigma^h(\vec{\tau}[\vec{x}/\vec{c}]) }
                          { \IA,h \modelsS  \psi(\vec{c})
  \text{ \ for some } \vec{c} \in \Uni^{|\vec{x}|} }
\\
\ &= \ 
  \setm{ \sigma^t(\vec{\tau}[\vec{x}/\vec{c}]) }{ \IA,t \modelsS  \psi(\vec{c})
                            \text{ \ for some } \vec{c} \in \Uni^{|\vec{x}|} }
\end{align*}
and, thus,
$\IA,h \models \psi(\vec{c})$
iff
$\IA,t \models \psi(\vec{c})$
iff
$T \modelscl \psi(\vec{c})$ (Proposition~\ref{prop:modelscl})
for all $\vec{c} \in \Uni^{|\vec{x}|}$.
This implies
$$\IA \models \bigwedge \grndp{T}{\psi(\vec{x})}
  = \bigwedge \setm{ \psi(\vec{c}) \in \grnd(T) }{  \text{ and } T \modelscl \psi(\vec{c}) }$$
Since this is a $(i-1)$-formula, by induction hypothesis, we get
$$H \modelscl \big(\ \bigwedge \grndp{T}{\psi(\vec{x})}\ \big)^T = A^T$$
Assume now that $H \modelscl A^T$,
then $T \modelscl A$ and we get
$$
\hat{f}\big(\, \setm{ \vec{c} \in \Uni^{|\vec{x}|} }{ T\modelscl\psi(\vec{c}) }\, \big) \rel n
$$
which
implies
$\hat{f}\big(\, \setm{ \vec{c} \in \Uni^{|\vec{x}|} }{ \IA,t \modelsS \psi(\vec{c}) }\, \big) \rel n$.
From this and Definition~\ref{def:coherent.set},
we get
$\hat{f}\big(\, \sigma^t(\set{ \vec{x} : \psi(\vec{x}) })\, \big) \rel n$
and, in its turn, from this and Definition~\ref{def:coherent.aggregates}
we get
$$
\sigma^t(f\set{ \vec{x} : \psi(\vec{x}) }) \rel \sigma^t(n)
$$
Hence, we obtain that $\IA,t \models A$.
Furthermore, $H \modelscl A^T =  \big(\ \bigwedge \grndp{T}{\psi(\vec{x})}\ \big)^T$
implies that, for all $\vec{c} \in \Uni^{|\vec{x}|}$,
$H \modelscl \psi(\vec{c})^T$ whenever $T \modelscl \psi(\vec{c})$.
By induction hypothesis,
this implies that
$\IA \models \psi(\vec{c})^T$ holds whenever
$T \modelscl \psi(\vec{c})$.
and, thus, we get that
$\IA,t \modelsS \psi(\vec{c})$
implies
$\IA \modelsS \psi(\vec{c})$
for all $\vec{c} \in \Uni^{|\vec{x}|}$.
Hence,
$$
\sigma^t(\set{\vec{x}\!:\!\psi(\vec{x})})
  \ \ = \ \ \sigma^h(\set{\vec{x}\!:\!\psi(\vec{x})})
$$
and, thus, that $\IA \models A$.
\\[10pt]
The cases for connective $\wedge$, $\vee$ and $\to$ follow by structural induction as in Lemma~1 from~\cite{ferraris05a}:
$\IA \models \varphi_1 \wedge \varphi_2$
(resp. $\IA \models \varphi_1 \wedge \varphi_2$)
\\iff
$\IA \models \varphi_1$ and (resp. or) $\IA \models \varphi_2$
\\iff
$H \modelscl \varphi_1^T$ and (resp. or) $H \modelscl \varphi_2^t$
\\iff
$H \modelscl \varphi_1^T \wedge \varphi_2^T$
(resp. $H \modelscl \varphi_1^T \vee \varphi_2^T$)
\\iff
$H \modelscl (\varphi_1 \wedge \varphi_2)^T$
(resp. $H \modelscl (\varphi_1 \vee \varphi_2)^T$).
\\[5pt]
Finally,
$\IA \models \varphi_1 \to \varphi_2$
\\iff both
$\IA,h \not\models \varphi_1$ or $\IA,h \models \varphi_2$
and
$\IA,t \not\models \varphi_1$ or $\IA,t \models \varphi_2$
\\iff both
$\IA \not\models \varphi_1$ or $\IA \modelscl \varphi_2$
and
$T \not\modelscl \varphi_1$ or $T \modelscl \varphi_2$
\\iff both
$H \not\modelscl \varphi_1^T$ or $H \modelscl \varphi_2^T$
and
$T \not\modelscl \varphi_1$ or $T \modelscl \varphi_2$
\\iff both
$H \modelscl \varphi_1^T \to \varphi_2^T$
and
$T \modelscl \varphi_1 \to \varphi_2$
\\iff both
$H \not\modelscl \varphi_1^T \to \varphi_2^T$
and
$\varphi^T = \varphi_1^T \to \varphi_2^T$
\\iff
$H \modelscl \varphi^T$
\end{Proofof}

\begin{lemma}\label{lem:grounding}
Let $\Gamma$ be any \GZtheory\ and let $\IA$ be any coherent interpretation
 and $T$ be a set of atoms.
Then, 
\begin{enumerate}[ topsep=1pt, itemsep=0pt, label=\roman*), leftmargin=15pt]
\item $\IA \models \Gamma$ iff $\IA \models \grnd(\Gamma)$,
\label{item:1:lem:grounding}

\item $T$ is a stable model of $\Gamma$ iff $T$ is a stable model of $\grnd(\Gamma)$.\qed
\label{item:2:lem:grounding}
\end{enumerate}

\end{lemma}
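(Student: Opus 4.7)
The plan is to handle the two parts separately, each by direct unfolding of the relevant definitions.

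For part~\ref{item:1:lem:grounding}, I would first observe that every formula in $\Gamma$ has, by the definition of \GZformula, the shape $\forall \vec{x}\, \psi(\vec{x})$ with $\psi$ quantifier-free over GZ-atoms. Applying clause~(vii) of Definition~\ref{def:satisfy.s} once per quantified variable, I get that $\IA \modelsS \forall \vec{x}\, \psi(\vec{x})$ is equivalent to $\IA \modelsS \psi(\vec{c})$ for every $\vec{c} \in \Uni^{|\vec{x}|}$. Since, by construction, $\grnd(\forall \vec{x}\, \psi(\vec{x})) = \setm{\psi(\vec{c})}{\vec{c} \in \Uni^{|\vec{x}|}}$ enumerates precisely those ground instances, iterating the equivalence over all formulas in $\Gamma$ and collecting the grounded versions gives $\IA \models \Gamma$ iff $\IA \models \grnd(\Gamma)$. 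Coherence of $\IA$ is preserved trivially since it does not depend on the set of formulas under consideration.

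For part~\ref{item:2:lem:grounding}, I would invoke Definition~\ref{def:ferraris.reduct} directly. By that definition, $T$ is a stable model of $\Gamma$ iff $T$ is the $\subseteq$-minimal model of $\grnd(\Gamma)^T$. The key observation is that every formula in $\grnd(\Gamma)$ is already ground, so $\grnd(\grnd(\Gamma)) = \grnd(\Gamma)$; hence $T$ is a stable model of $\grnd(\Gamma)$ iff $T$ is the $\subseteq$-minimal model of $\grnd(\grnd(\Gamma))^T = \grnd(\Gamma)^T$. The two conditions therefore coincide literally.

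There is essentially no obstacle here: the lemma is bookkeeping that legitimizes reducing arbitrary GZ-theories to ground ones, as required downstream in the proof of Theorem~\ref{thm:gz}. The one minor care point I would flag is that the universal-quantifier clause of Definition~\ref{def:satisfy.s} ranges over the extended universe $\Uni$ (which now contains extensional sets and further set constructions), and the grounding operator $\grnd$ instantiates variables over this same $\Uni$; these ranges must match, but they do so by construction, so no mismatch between the semantics and the grounding operation can arise.
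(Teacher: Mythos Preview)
Your treatment of part~\ref{item:1:lem:grounding} matches the paper's exactly: unfold the universal quantifier over $\Uni$ and observe that this enumerates precisely $\grnd(\Gamma)$.

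For part~\ref{item:2:lem:grounding}, however, you and the paper read ``stable model'' differently. You invoke Definition~\ref{def:ferraris.reduct} and use the idempotence $\grnd(\grnd(\Gamma))=\grnd(\Gamma)$; this is correct but essentially tautological, since Definition~\ref{def:ferraris.reduct} already grounds $\Gamma$ before taking the reduct. The paper instead proves part~\ref{item:2:lem:grounding} for Definition~\ref{def:equilibrium} (the $\SQHTS$ equilibrium model), deriving it from part~\ref{item:1:lem:grounding}: since $\IA'\models\Gamma$ iff $\IA'\models\grnd(\Gamma)$ for every coherent $\IA'$, the two theories have identical sets of models, hence identical $\prec$-minimal total models, hence identical equilibrium models. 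This is the version actually needed in the proof of Theorem~\ref{thm:gz}, where the sentence ``from Definition~\ref{def:ferraris.reduct} and Lemma~\ref{lem:grounding}'' splits the work: Definition~\ref{def:ferraris.reduct} handles the reduct side (your argument), and the lemma is supposed to handle the equilibrium side. Your proof of part~\ref{item:2:lem:grounding} therefore establishes the half that is already immediate and leaves the half the theorem relies on unproved; the fix is simply to argue via part~\ref{item:1:lem:grounding} as the paper does.
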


\begin{proof}
By definition, we get:
$\IA \models \Gamma$
\\iff
$\IA \models \forall\vec{x}\varphi(\vec{x})$ for all $\varphi(\vec{x}) \in \Gamma$
\\iff
$\IA \models \varphi(\vec{c})$ for all $\varphi(\vec{x}) \in \Gamma$ and all $\vec{c} \in \Uni^{|\vec{x}|}$
\\iff
$\IA \models \varphi(\vec{c})$ for all $\varphi(\vec{x}) \in \grnd(\Gamma) = \setm{ \varphi(\vec{c}) }{ \forall\vec{x}\varphi(\vec{x}) \in \Gamma \text{ and }\vec{c} \in \Uni^{|\vec{x}|}}$
\\iff
$\IA \models \grnd(\Gamma)$.
\\[10pt]
Furthermore, $T$ is a stable model of $\Gamma$
\\iff there is some total coherent interpretation $\IAT$ which is an equilibrium model of $\Gamma$
\\iff there is some total coherent interpretation $\IAT$ which is an $\prec$-minimal model of $\Gamma$
\\iff there is some total coherent interpretation $\IAT$ which is an $\prec$-minimal model of $\grnd(\Gamma)$
\\iff
$T$ is a stable model of $\Gamma$.
\end{proof}

\begin{Proofof}{\ref{thm:gz}}
First note that, from Definition~\ref{def:ferraris.reduct}
and Lemma~\ref{lem:grounding},
we have that $T$ is a stable model of $\Gamma$ iff $T$  is a stable model of $\grnd(\Gamma)$ according to both Definitions.
Hence, we assume without loss of generality that $\Gamma$ is ground.
\\[10pt]
Let $\IAT$ be a total coherent interpretation.
Then, from Proposition~\ref{prop:reduct}, 
we get that
$\IA \models \Gamma$ iff $T \modelscl \Gamma^T$.
Let us show now that if
$T$ is the $\subseteq$-minimal model of $\Gamma^T$, then
$\IA$ is an equilibrium model of~$\Gamma$.
Suppose, for the sake of contradiction, that this does not hold.
Then, there is a some coherent interpretation
\mbox{$\IA' = \tuple{\sigma^h,\sigma^t,H,T}$}
such that
\mbox{$\IA' \preceq \IA$}
and
\mbox{$\IA \models \Gamma$}, but
\mbox{$\IA' \not\succeq \IA$}.
Note that, from Proposition~\ref{prop:reduct}, 
\mbox{$\IA' \models \Gamma$}
implies
$H \modelscl \Gamma^T$
while,  since $\IA'$ is coherent, 
\mbox{$\IA' \preceq \IA$} and \mbox{$\IA' \not\succeq \IA$}
imply
$H \subset T$
(note that all evaluable functions are aggregates and, thus, $\sigma^h$ and $\sigma^t$ are fully determined by $H$ and $T$, respectively)
which is a contradiction with the assumption.

The other way around.
Suppose, for the sake of contradiction, that
$\IA$ is an equilibrium model of $\Gamma$,
but
$T$ is not the $\subseteq$-minimal model of $\Gamma^T$.
Then there is some set $H \subset T$ that satisfies $H \modelscl \Gamma^T$
and, from Proposition~\ref{prop:reduct}, this implies
\mbox{$\IA' = \tuple{\sigma^h,\sigma^t,H,T} \models \Gamma$}
and that $\IA' \prec \IA$
which contradicts the fact that $\IA$ is a stable model of $\Gamma$.
\end{Proofof}

\begin{Proofof}{\ref{prop:existencial.intr}}
Let $\IAA$ be some coherent interpretation.
If $\varphi$ is an atom, by definition, we have that
$\IA \modelsS \exists x, x = \tau_i \wedge p(\tau_1, \dotsc, x, \dotsc, \tau_n)$
\\iff
$\IA \modelsS c = \tau_i \wedge p(\tau_1, \dotsc, c, \dotsc, \tau_n)$
for some $c \in \Terms^0(\C)$
\\iff
$\IA \modelsS c = \tau_i$
and
$\IA \modelsS p(\tau_1, \dotsc, c, \dotsc, \tau_n)$
for some $c \in \Terms^0(\C)$
\\iff
$\sigma^h(c) = \sigma^h(\tau_i) \neq \tu$
and
$p(\sigma^h(\tau_1), \dotsc, \sigma^h(c), \dotsc, \sigma^h(\tau_n)) \in I^h$
for some constant \mbox{$c \in \Terms^0(\C)$}
\\iff
$\sigma^h(c) = \sigma^h(\tau_i) \neq \tu$
and
$p(\sigma^h(\tau_1), \dotsc, \sigma^h(\tau_i), \dotsc, \sigma^h(\tau_n)) \in I^h$
for some constant \mbox{$c \in \Terms^0(\C)$}
\\iff
$\sigma^h(\tau_i) \neq \tu$
and
$p(\sigma^h(\tau_1), \dotsc, \sigma^h(\tau_i), \dotsc, \sigma^h(\tau_n)) \in I^h$
\\iff
$p(\sigma^h(\tau_1), \dotsc, \sigma^h(\tau_i), \dotsc, \sigma^h(\tau_n)) \in I^h$
\\iff
$\IA \modelsS p(\tau_1, \dotsc, \tau_i, \dotsc, \tau_n)$.
\end{Proofof}
\end{document}